\icmltitlerunning{GNI Formulation for Games}
\newtheorem{lemma}{Lemma}
\newtheorem{assumption}{Assumption}
\newtheorem{theorem}{Theorem}
\newtheorem{corollary}{Corollary}
\newcommand{\R}{\mathbb{R}}
\newcommand{\half}{\frac{1}{2}}
\newcommand{\bfx}{\boldsymbol{x}}
\newcommand{\bfy}{\boldsymbol{y}}
\newcommand{\bfzero}{\boldsymbol{0}}
\newcommand{\prob}{P}
\DeclareMathOperator*{\diag}{diag}
\newcommand{\expect}{\mathbb{E}}
\newcommand{\norm}[1]{\left\|{#1}\right\|}
\newcommand{\dist}[1]{d_{#1}}
\newcommand{\bfQ}{\boldsymbol{Q}}
\newcommand{\bfr}{\boldsymbol{r}}
\begin{document}

\twocolumn[
\icmltitle{Game Theoretic Optimization via Gradient-based Nikaido-Isoda Function }



\icmlsetsymbol{equal}{*}

\begin{icmlauthorlist}
\icmlauthor{Arvind U. Raghunathan}{merl}
\icmlauthor{Anoop Cherian}{merl}
\icmlauthor{Devesh K. Jha}{merl}
\end{icmlauthorlist}

\icmlaffiliation{merl}{All authors are with Mitsubishi Electric Research Labs (MERL), Cambridge, MA}

\icmlcorrespondingauthor{Arvind U. Raghunathan}{raghunathan@merl.com}
\icmlcorrespondingauthor{Anoop Cherian}{cherian@merl.com}
\icmlcorrespondingauthor{Devesh K. Jha}{jha@merl.com}

\icmlkeywords{Game Theory, Nikaido Isoda, Non-cooperative games, non-convex optimization, Nash Equilibrium}

\vskip 0.3in
]



\printAffiliationsAndNotice{}  

\begin{abstract} 
Computing Nash equilibrium (NE) of multi-player games has witnessed renewed interest due to recent advances in generative adversarial networks. However, computing equilibrium efficiently is challenging. To this end, we introduce the \emph{Gradient-based Nikaido-Isoda } (GNI) function which serves: (i) as a merit function, vanishing only at the first-order stationary points of each player's optimization problem, and (ii) provides error bounds to a stationary Nash point. Gradient descent is shown to converge sublinearly to a first-order stationary point of the GNI function. For the particular case of bilinear min-max games and multi-player quadratic games the GNI function is convex.  Hence, the application of gradient descent in this case yields linear convergence to an NE (when one exists). In our numerical experiments we observe that the GNI formulation always converges to the first-order stationary point of each player's optimization problem.
\end{abstract} 

\section{Introduction}\label{sec:introduction}
In this work, we consider the general $N$-player game:
\begin{equation}
\begin{aligned}
	\text{Find } \bfx^* = (x_1^\star,\ldots,x_N^\star) \text{ s.t. }  \\
	x_i^\star = \arg\min\limits_{\bfx \in \R^n : \bfx_{-i} = \bfx_{-i}^*} f_i(\bfx) 
\end{aligned}\label{gameForm}	
\end{equation}
where $x_i \in \R^{n_i}$, $n = \sum_{i=1}^N n_i$, $f_i : \R^{n} 
\rightarrow \R$,  
$\bfx = (x_1,\ldots,x_N) \in \R^n$ denotes the collection of all $x_j$'s, 
while $\bfx_{-i}$ 
denotes the collection of all $x_j$'s except for index $i$, \emph{i.e.} 
$\bfx_{-i} = (x_1,\ldots,x_{i-1},x_{i+1},\ldots,x_N) \in \R^{(n-n_i)}$.  
Observe that the choice of $\bfx_{-i}$ are 
specified when performing the minimization in~\eqref{gameForm} for player $i$.  

A point $\bfx^\star $ satisfying~\eqref{gameForm} is called a 
\emph{Nash Equilibrium} (NE).  
We denote by ${\cal S}^{NE}$ the set of all NE points, \emph{i.e.,} 
${\cal S}^{NE} = \{ \bfx^\star \,|\, \eqref{gameForm} \text{ holds} \}$.  
In the absence of convexity for the 
functions $f_i$ we may not be able to obtain a minimizer 
in~\eqref{gameForm} and have to settle for a first-order stationary 
point. 
Accordingly, define ${\cal S}^{SNP}$ to be the set of all \emph{Stationary  
Nash Points}, \emph{i.e.,} 
${\cal S}^{SNP} = \{ \bfx^\star \,|\, \nabla_i f_i(\bfx^\star) = 0;\forall\; i = 1,\ldots,N \}$ 
where $\nabla_i f$ denotes the derivative of function $f$ \emph{w.r.t.} $x_i$.

There has been renewed interest in Nash equilibrium computation 
for games owing to the success of 
Generative Adversarial Networks (GANs) \cite{GANfirst}.  
GANs have been successful in learning probability 
distributions and have found application in tasks including image-to-image 
translation~\citep{GANimage2image},  
domain adaptation~\citep{GANdomain}, probabilistic inference~\citep{GANprobinf1,GANprobinf2} among others.  
Despite their popularity, GANs are known to be difficult to train. 
In order to stabilize training recent 
approaches have resorted to carefully designed models, either by 
adapting an 
architecture~\citep{GANarchmod} or by selecting an easy-to-optimize objective function~\citep{GANobjmod1,GANobjmod2,GANobjmod3}.  

The Nikaido-Isoda (NI) function~\cite{NikaidoIsoda} (formally introduced in \S\ref{sec:GNI}) is popular 
in equilibrium  computation~\cite{NI_UryasevRubinstein,Krawczyk2004,FaccKanzow07,HeusingerKanzow1,HeusingerKanzow2} and often used as a merit function for NE.
The evaluation of the NI function requires optimizing each player's problem 
globally which can be intractable for non-convex objectives.   

In this paper, we introduce \textit{Gradient-based Nikaido-Isoda} (GNI) function which allows us to computationally simplify the original NI formulation. Instead of computing a globally optimal solution, every player 
can locally improve their objectives using the steepest descent 
direction. 
The proposed GNI function simplifies the original NI formulation by relaxing the requirement on optimizing individual player's objective globally. We prove that GNI is a valid merit function for multi-player games and vanishes only at the first-order stationary points of each player's optimization 
problem (\S\ref{sec:GNI}). 
The GNI function is shown to be locally stable in a 
neighborhood of a stationary Nash point (\S\ref{sec:stability}) and 
convex when the player's objective function is quadratic 
(\S\ref{sec:gniconvex}).  
The gradient descent algorithm applied to the GNI function converges 
to a stationary Nash point (\S\ref{sec:algorithm}). 
In addition, if each of the player's objective is convex in the 
player's variables ($x_i$)  then the algorithm converges to the 
NE point as long as one exists (\S\ref{sec:algorithm}). 
A secant approximation is provided to simplify the computation of 
the gradient of the GNI function and the convergence of the modified 
algorithm is also analyzed (\S\ref{sec:moddescent}).  
Numerical experiments in \S\ref{sec:expts} show that the proposed 
algorithm is effective in converging to stationary Nash points 
of the games.

We believe our proposed GNI formulation could be an effective approach for training  GANs. However, we emphasize that the focus of this paper is to provide a rigorous analysis of the GNI formulation for games and explore its properties in a non-stochastic setting. The adaptation of our proposed formulations to a stochastic setting (which is the typical framework commonly used in GANs) will need additional results, which will be explored in a future paper.


\section{Related Work}\label{sec:related_work}
Nash Equilibrium (NE) computation, a key area in algorithmic game theory, has seen a number of 
developments  since the pioneering work of John von Neumann~\cite{basar1999dynamic}.  
It is well known that the Nash equilibrium problem can be reformulated as a variational inequality problem, VIP for short, see, for example,~\cite{finiteVIbook1}.  The VIP is a generalization of 
the first-order optimality condition in ${\cal S}^{SNP}$ to the case where the decision 
variables of player $i$'s $x_i$ are constrained to be in a convex set.  
\citet{FaccKanzow10} proposed penalty methods for the solution of generalized Nash 
equilibrium problems (Nash equilibrium problems with joint constraints).  
\citet{IusemJofre17} provides a detailed analysis of the extragradient algorithm for 
stochastic pseudomonotone variational inequalities (corresponding to games with 
pseudoconvex costs).
 
Nash Equilibrium computation has found renewed interest due to the emergence of Generative Adversarial Networks (GANs). It has been 
observed that the alternating stochastic gradient descent (SGD) 
is oscillatory when training GANs~\cite{Goodfellow16}.  Several papers 
proposed to modify the GAN formulation in order to stabilize the convergence of the iterates.
These include  non-saturating GAN formulation of~\cite{GANfirst,Fedus18}, 
the DCGAN formulation~\cite{GANarchmod}, the gradient penalty 
formulation for WGANs~\cite{GANobjmod3}.  
The authors in~\cite{GANstablize} proposed a momentum based step on the generator in the 
alternating SGD for convex-concave saddle point problems.  
Daskalakis \emph{et al.}~\cite{DaskalakisGAN1} 
proposed the optimistic mirror descent (OMD) algorithm, and showed convergence for bilinear 
games and divergence of the gradient descent iterates.  In a subsequent work, Daskalakis \emph{ et al.}~\cite{DaskalakisGAN2} analyzed the limit points of gradient descent 
and OMD, and showed that the limit points of OMD is a superset of alternating gradient descent.  
\citet{MertZenLec19} generalized and extended the work of~\citet{DaskalakisGAN1} for bilinear games.  
\citet{DualGAN} dualize the GAN objective to reformulate it as 
a maximization problem and \citet{GANprobinf2} add the norm of the gradient in the objective.  
The norm of the gradient is shown to locally 
stabilize the gradient descent iterations in~\citet{NagarajanKolter}.  
\citet{VIGAN} formulate the GAN equilibrium as a VIP and propose an extrapolation technique 
to prevent oscillations.  The authors show convergence of stochastic algorithm under the 
assumption of monotonicity of VIP, which is stronger than the convex-concave assumption in  
min-max games.  
Finally, the convergence of stochastic gradient descent in non-convex games has also 
been studied in \citet{BervBravFaure18,MertZhou19}.

In contrast to existing approaches, the GNI approach does not assume 
monotonicity in the game formulations.  The GNI approach is also closely 
related to the idea of minimizing  residuals~\cite{finiteVIbook1,finiteVIbook2}.  


\section{Gradient-based Nikaido-Isoda Function}\label{sec:GNI}
The Nikaido-Isoda (NI) function introduced in~\cite{NikaidoIsoda} is defined as
\begin{equation*}
	\begin{aligned}
		\psi(\bfx) = \sum\limits_{i=1}^N  \underbrace{\left( f_i(\bfx) - \inf_{\hat{\bfx} \in \R^n 
				: \hat{\bfx}_{-i} = \bfx_{-i} } f_i(\hat{\bfx})\right)}_{=: \psi_i(\bfx)} . 
	\end{aligned}\label{defPsi}
\end{equation*}
From the definition of NI function $\psi(\bfx)$, it is easy to show that $\psi(\bfx) \geq 0$ for all 
$\bfx \in \R^n$.  
Further, $\psi(\bfx) = 0$ is the global minimum which is only achieved if the NE point 
$\bfx^\star = (x_1^*,\ldots,x_N^\star)$ occurs at points where 
$x_i^*$ are global minimizers of 
the respective optimization problems in~\eqref{gameForm}.  A number of 
papers~\cite{NI_UryasevRubinstein,HeusingerKanzow1,HeusingerKanzow2} have 
proposed algorithms that minimize $\psi(\bfx)$ to compute NE points.  However, the 
infimum needed to compute $\psi_i(\bfx)$ can be prohibitive for all but a handful of functions.  
For bilinear min-max games (i.e., $f_1(\bfx) = x_1^Tx_2 = -f_2(\bfx)$), the 
infimum is unbounded below and the approach of minimizing NI fails.  To rectify this recent 
papers have proposed regularized variants~\cite{HeusingerKanzow2}.  
However, the cost of globally minimizing the nonlinear function can still be prohibitive.


To rectify the shortcoming of the NI function, we introduce the Gradient-based Nikaido-Isoda 
(GNI) function
\begin{equation}
	\begin{aligned}
					&	V(\bfx;\eta) =\sum\limits_{i=1}^N  
						\underbrace{f_i(\bfx) - f_i(\bfy(\bfx;i,\eta))}_{=: V_i(\bfx;\eta)} \\
	\end{aligned}\label{defVy}
\end{equation}
\begin{equation*}
	\begin{aligned}
		\text{where } y_j(\bfx;i,\eta) = \left\{ \begin{aligned}
			x_i - \eta \nabla_i f_i(\bfx) & \text{, if } j = i \\
			x_j & \text{, otherwise}.
			\end{aligned} \right.
	\end{aligned}
\end{equation*}  
where $\nabla_i f(\bfx)$ denotes the derivative of function $f$ 
\emph{w.r.t.} $x_i$.

The GNI function is obtained by replacing the infimum in the NI function for player $i$ with a point 
$\bfy(\bfx;i,\eta)$ in the steepest descent direction.  This provides a local measure of decrease 
that can be obtained 
in the objective for player $i$.  The point $\bfy(\bfx;i,\eta)$ is similar in spirit to the 
\emph{Cauchy point} that is used in \emph{trust-region} methods~\citep{NocWrightbook}.  
We will show that any point satisfying 
$V_i(\bfx;\eta) = 0$ also satisfies $\nabla_i f_i(\bfx) = 0$.  To show this, we first provide bounds on 
$V_i(\bfx;\eta)$ in terms of the distance from first-order 
optimality conditions for each of the players. 

We make the following standing assumption.
\begin{assumption}\label{assume:differentiable}
The functions $f_i$ are at least twice continuously differentiable and  
gradients of $f_i$ (i.e., $\nabla f_i$) are Lipschitz continuous with constant $L_f$.
\end{assumption}

\begin{lemma}\label{lemma:errBndV}
$\frac{\eta}{2} \|\nabla_i f_i(\bfx)\|^2 \leq V_i(\bfx;\eta) \leq \frac{3\eta}{2} \|\nabla_i f_i(\bfx)\|^2$ 
for all $\bfx \in \R^{n}$ and $0 < \eta \leq \frac{1}{L_f}$. 
\end{lemma}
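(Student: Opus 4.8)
The plan is to treat $V_i(\bfx;\eta)$ as a single-block perturbation and apply the two-sided quadratic bounds that follow from Lipschitz continuity of the gradient. Write $g := \nabla_i f_i(\bfx)$, and observe that $\bfy(\bfx;i,\eta)$ differs from $\bfx$ only in the $i$-th block, where it equals $x_i - \eta g$. Hence $\bfy - \bfx = -\eta g$ in block $i$ and zero elsewhere, so that $\nabla f_i(\bfx)^T(\bfy - \bfx) = -\eta\norm{g}^2$ and $\norm{\bfy - \bfx}^2 = \eta^2\norm{g}^2$. Since $\nabla f_i$ is $L_f$-Lipschitz by Assumption~\ref{assume:differentiable} (and its $i$-th block inherits this constant), the standard descent lemma supplies
\[
\left| f_i(\bfy) - f_i(\bfx) - \nabla f_i(\bfx)^T(\bfy - \bfx) \right| \le \frac{L_f}{2}\norm{\bfy - \bfx}^2.
\]

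First I would derive the lower bound. Substituting the inner-product and norm computations into the upper half of this inequality yields $f_i(\bfy) \le f_i(\bfx) - \eta\norm{g}^2 + \frac{L_f}{2}\eta^2\norm{g}^2$, so that $V_i(\bfx;\eta) = f_i(\bfx) - f_i(\bfy) \ge \eta\norm{g}^2\bigl(1 - \tfrac{L_f\eta}{2}\bigr)$. Because $0 < \eta \le 1/L_f$ forces $L_f\eta \le 1$, the bracketed factor is at least $\half$, giving $V_i(\bfx;\eta) \ge \frac{\eta}{2}\norm{g}^2$.

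The upper bound is symmetric. The lower half of the descent inequality gives $f_i(\bfy) \ge f_i(\bfx) - \eta\norm{g}^2 - \frac{L_f}{2}\eta^2\norm{g}^2$, whence $V_i(\bfx;\eta) \le \eta\norm{g}^2\bigl(1 + \tfrac{L_f\eta}{2}\bigr)$. Using $L_f\eta \le 1$ once more, the bracketed factor is at most $\frac{3}{2}$, which yields $V_i(\bfx;\eta) \le \frac{3\eta}{2}\norm{g}^2$. Both estimates hold for every $\bfx \in \R^{n}$.

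I do not anticipate a genuine obstacle here; the only point requiring care is the observation that the perturbation lives in a single coordinate block, so that the directional term collapses exactly to $-\eta\norm{\nabla_i f_i(\bfx)}^2$ and the effective step length entering the quadratic remainder is exactly $\eta\norm{\nabla_i f_i(\bfx)}$. Everything else is a direct consequence of the descent lemma together with the range restriction $0 < \eta \le 1/L_f$.
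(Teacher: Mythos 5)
Your proposal is correct and takes essentially the same route as the paper: the paper's proof writes the second-order Taylor expansion of $f_i$ along the single-block step $-\eta\nabla_i f_i(\bfx)$ and bounds the integral remainder via $-L_f I \preceq \nabla^2_i f_i \preceq L_f I$, which is precisely the two-sided descent-lemma estimate you invoke in packaged form. Both arguments reduce to $\eta\bigl(1-\tfrac{\eta L_f}{2}\bigr)\norm{\nabla_i f_i(\bfx)}^2 \le V_i(\bfx;\eta) \le \eta\bigl(1+\tfrac{\eta L_f}{2}\bigr)\norm{\nabla_i f_i(\bfx)}^2$ followed by the restriction $\eta L_f \le 1$.
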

\begin{proof}
Using the Taylor's series expansion of $f_i$ around $\bfx$ and substituting for 
$\bfy(\bfx;i,\eta)$, we obtain
\[\begin{aligned}
	&\, f_i(\bfy(\bfx;i,\eta)) 	
=	f_i(\bfx) - \eta \| \nabla_i f_i(\bfx) \|^2 \\
	&\,	+ \eta^2 \int\limits_{0}^1  \nabla_i f_i(\bfx)^T
	\nabla^2_i f_i(\hat{\bfx}(t)) \nabla_i f_i(\bfx) t dt 
\end{aligned}\]
where $\hat{x}_i(t) = x_i - t \eta \nabla_i f_i(\bfx)$ and $\hat{x}_j(t) = x_j$, for $j \neq i$.  
From the Lipschitz continuity of the gradient of $f_i$, we have that 
$-L_f I_{i} \preceq \nabla^2_i f_i(\hat{\bfx}(t)) \preceq L_f I_{i}$, where $I_i$ is the ($n_i \times n_i$) identity matrix. 
Substituting in the above and using $\eta \leq \frac{1}{L_f}$ yields the claim.  
\end{proof}

We now state our main result relating the zeros of $V_i(x;\eta)$ and the first-order critical points 
of the players's optimization problems. 

\begin{theorem}\label{thm:zeroVEqForderCrit}
The global minimizers of $V(\bfx;\eta)$ are all stationary Nash points, i.e., 
$\{\bfx^\star \,|\, V(\bfx^\star;\eta) = 0\} = {\cal S}^{SNP}$ for all 
$0 < \eta \leq \frac{1}{L_f}$.  If the individual functions $f_i$ are convex, then the global minimizers 
of $V(\bfx;\eta)$ are precisely the set ${\cal S}^{NE}$. 
\end{theorem}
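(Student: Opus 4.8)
The plan is to derive both claims directly from the two-sided bound established in Lemma~\ref{lemma:errBndV}, so the theorem becomes essentially a bookkeeping exercise. First I would record that the lower bound gives $V_i(\bfx;\eta) \geq \frac{\eta}{2}\|\nabla_i f_i(\bfx)\|^2 \geq 0$ for every $i$, whence $V(\bfx;\eta) = \sum_{i=1}^N V_i(\bfx;\eta) \geq 0$ for all $\bfx \in \R^n$. Thus the global minimum value of $V$ is at least $0$, and a point attains it exactly when $V(\bfx;\eta)=0$; since $V$ is a sum of nonnegative terms, this forces $V_i(\bfx;\eta)=0$ simultaneously for every $i$. Identifying the global minimizers with the zero set of $V$ therefore reduces the first claim to the set equality $\{\bfx^\star \mid V(\bfx^\star;\eta)=0\} = {\cal S}^{SNP}$.

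Next I would prove that set equality by showing both inclusions. For $\subseteq$, if $V(\bfx^\star;\eta)=0$ then each $V_i(\bfx^\star;\eta)=0$, and the lower bound forces $\frac{\eta}{2}\|\nabla_i f_i(\bfx^\star)\|^2 \leq 0$, so $\nabla_i f_i(\bfx^\star)=0$ for all $i$, i.e. $\bfx^\star \in {\cal S}^{SNP}$. For $\supseteq$, if $\bfx^\star \in {\cal S}^{SNP}$ then $\nabla_i f_i(\bfx^\star)=0$ for all $i$, and the upper bound gives $V_i(\bfx^\star;\eta) \leq \frac{3\eta}{2}\|\nabla_i f_i(\bfx^\star)\|^2 = 0$; combined with $V_i \geq 0$ this yields $V_i(\bfx^\star;\eta)=0$ and hence $V(\bfx^\star;\eta)=0$. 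Both directions use only the sandwiching bounds, which is precisely where the hypothesis $0 < \eta \leq \frac{1}{L_f}$ enters (it is what makes Lemma~\ref{lemma:errBndV} valid).

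For the convex case the only remaining step is to argue that ${\cal S}^{SNP}={\cal S}^{NE}$ when each $f_i$ is convex in its own block. Fixing $\bfx_{-i}^\star$, the map $x_i \mapsto f_i(x_i,\bfx_{-i}^\star)$ is convex, so the stationarity condition $\nabla_i f_i(\bfx^\star)=0$ is both necessary and sufficient for $x_i^\star$ to globally minimize player $i$'s subproblem in~\eqref{gameForm}. Hence $\bfx^\star \in {\cal S}^{SNP}$ if and only if $\bfx^\star \in {\cal S}^{NE}$, and combining this with the first part identifies the global minimizers of $V(\bfx;\eta)$ with ${\cal S}^{NE}$. The argument carries almost no risk of a hidden obstacle; the one place requiring care is this convex step, where I would make sure convexity is invoked in the correct per-player variable so that first-order stationarity of each subproblem genuinely upgrades to global optimality. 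I would also note that the set equality remains (vacuously) correct when ${\cal S}^{SNP}$, equivalently the zero set of $V$, is empty, so no separate existence assumption is needed.
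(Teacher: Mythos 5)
Your proposal is correct and follows essentially the same route as the paper's own proof: both derive nonnegativity of $V$ and the equivalence $V(\bfx;\eta)=0 \Leftrightarrow \nabla_i f_i(\bfx)=0$ for all $i$ directly from the sandwich bounds of Lemma~\ref{lemma:errBndV}, and both handle the convex case by noting ${\cal S}^{SNP}={\cal S}^{NE}$ under (player-wise) convexity. Your write-up simply makes explicit the two inclusions and the per-player convexity argument that the paper leaves implicit.
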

\begin{proof}
The nonnnegativity of $V(\bfx;\eta)$ follows from Lemma~\ref{lemma:errBndV}.  Further, 
$V(\bfx;\eta) = 0$ if and only if $\nabla_i f_i(\bfx) = 0$. This proves the claim.  
The second claim follows by noting that ${\cal S}^{NE} = {\cal S}^{SNP}$, if the 
functions $f_i$ are convex.
\end{proof}

Theorem~\ref{thm:zeroVEqForderCrit} shows that the function $V(\bfx;\eta)$ can be employed as a 
merit  function for obtaining a stationary Nash point.  When 
$f_i(\bfx)$ are non-convex, the convergence to first-order point is possibly the best that one can 
hope for.  

We provide the expressions for the gradient and Hessian of $V_i(\bfx;\eta)$ next.  These 
expressions follow from the chain rule of differentiation.
The gradient of $V_i(\bfx;\eta)$ is 
\begin{equation}
\begin{aligned}
	& 	\nabla V_i(\bfx;\eta) =\\
			&\quad\nabla f_i(\bfx) - \left(I - \eta \nabla^2 f_i(\bfx) E_i \right) \nabla f_i(\bfy(\bfx;i,\eta)) \label{defgradV}
		\end{aligned}
\end{equation}
where $E_i = F_i F_i^T$ with $F_i \in \R^{n \times n_i}$ defined as 
$F_i^T = \begin{bmatrix} \bfzero_{n_i \times \sum_{j=1}^{i-1}n_j }  & 
I_{i} & \bfzero_{n_i \times \sum_{j=i+1}^N n_j} \end{bmatrix}$, $I \in \R^{n \times n}$, and $I_{i} \in \R^{n_i \times n_i}$ are identity matrices. The Hessian of 
$V_i(\bfx;\eta)$ is given by
\begin{equation}
	\begin{aligned}
		& \nabla^2 V_i(\bfx;\eta) 
	=		\nabla^2 f_i(\bfx) 
		+ \eta \nabla^3 f_i(\bfx) [E_i \nabla f_i(\bfy_i(\bfx;i,\eta)] \\
		& - (I - \eta \nabla^2 f_i(x) E_i ) \nabla^2  f_i(\bfy(\bfx;i,\eta)) (I - \eta E_i \nabla^2 f_i(\bfx)) 
		\end{aligned}\label{defhessV}
\end{equation}
where $\nabla^3 f_i(\bfx) [d] = \lim_{\alpha \rightarrow 0} \frac{\nabla^2 f_i(\bfx+\alpha d) - 
\nabla^2 f_i(\bfx)}{\alpha}$ is the action of the third derivative along the direction $d$. These expressions will come useful in our analysis to follow.


\subsection{GNI is Locally Stable}\label{sec:stability}
GAN formulations typically result in objective functions $f_i(x)$ that 
are not convex.   
Nagarajan and Kolter~\cite{NagarajanKolter} showed that the gradient descent 
for min-max games is not stable for Wasserstein GANs.  This is due to the  
concave-concave nature of Wasserstein GAN around stationary Nash 
points~\cite{NagarajanKolter}.  
Daskalakis \emph{et al.}~\cite{DaskalakisGAN1} showed that 
the gradient descent 
diverges for simple bilinear min-max games, while the optimistic gradient 
decent algorithm of Rakhlin and Sridharan~\cite{RakhlinSridharan} was shown to be 
convergent.  Daskalakis and Pangeas~\cite{DaskalakisGAN2} further analyzed the 
limit points of gradient descent and optimistic gradient descent using dynamical 
systems theory.  

In this section, we show that at every stationary Nash point, the Hessian of $V(\bfx;\eta)$ is 
positive semidefinite.  This ensures that the points in ${\cal S}^{SNP}$ are all stable limit points for 
the gradient descent algorithm on $V(\bfx;\eta)$.  

\begin{lemma}\label{lemma:hessVpsd}
For $0\leq\eta\leq\frac{1}{L_f}$, $\nabla V^2(\bfx^*;\eta) = \sum_{i=1}^N \nabla^2 V_i(\bfx^\star;\eta)$ is positive semidefinite for all~$\bfx^\star \in {\cal S}^{SNP}$.
\end{lemma}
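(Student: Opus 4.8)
The plan is to exploit that a stationary Nash point $\bfx^\star$ satisfies $\nabla_i f_i(\bfx^\star) = \bfzero$ for every $i$, which dramatically simplifies the Hessian formula~\eqref{defhessV}, and then to close the argument with the Lipschitz bound from Assumption~\ref{assume:differentiable}.

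First I would record the two consequences of stationarity. Since the $i$-th gradient step vanishes, $\bfy(\bfx^\star;i,\eta) = \bfx^\star$; and since $F_i^T \nabla f_i(\bfx^\star) = \nabla_i f_i(\bfx^\star) = \bfzero$, we also get $E_i \nabla f_i(\bfx^\star) = \bfzero$. Feeding these into~\eqref{defhessV}, the third-derivative term $\eta \nabla^3 f_i(\bfx^\star)[E_i \nabla f_i(\bfy(\bfx^\star;i,\eta))]$ vanishes (its direction is $\bfzero$), and with the shorthand $H_i := \nabla^2 f_i(\bfx^\star)$ the Hessian reduces to
\[
\nabla^2 V_i(\bfx^\star;\eta) = H_i - (I - \eta H_i E_i)\, H_i\, (I - \eta E_i H_i).
\]

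Next I would expand this product, using that $E_i = F_i F_i^T$ is a symmetric idempotent (indeed $E_i^2 = E_i$ because $F_i^T F_i = I_i$). The leading $H_i$ cancels and, after collecting terms, one is left with the symmetric expression $\nabla^2 V_i(\bfx^\star;\eta) = 2\eta H_i E_i H_i - \eta^2 H_i E_i H_i E_i H_i$. The decisive step is to examine the associated quadratic form directly: for an arbitrary $v \in \R^n$, set $w := E_i H_i v$; then using $E_i = E_i^T = E_i^2$ and the symmetry of $H_i$, both terms rewrite cleanly in terms of $w$, giving
\[
v^T \nabla^2 V_i(\bfx^\star;\eta)\, v = \eta\, w^T (2I - \eta H_i)\, w.
\]

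Finally, Assumption~\ref{assume:differentiable} yields $-L_f I \preceq H_i \preceq L_f I$, so $\eta \leq \frac{1}{L_f}$ forces $\eta H_i \preceq I$ and hence $2I - \eta H_i \succeq I \succeq 0$. Therefore $v^T \nabla^2 V_i(\bfx^\star;\eta) v \geq \eta \norm{w}^2 \geq 0$ for every $v$, so each $\nabla^2 V_i(\bfx^\star;\eta) \succeq 0$; summing over $i$ gives the claim. I expect the main obstacle to be the bookkeeping in the third step: the key realization is that $E_i$ being an idempotent projection is exactly what makes $H_i E_i H_i = (E_i H_i)^T(E_i H_i)$ and $H_i E_i H_i E_i H_i = (E_i H_i)^T H_i (E_i H_i)$, so the entire form compresses to $\eta\, w^T(2I - \eta H_i)w$; absent that observation, the expression $2\eta H_i E_i H_i - \eta^2 H_i E_i H_i E_i H_i$ is not manifestly sign-definite.
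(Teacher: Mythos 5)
Your proposal is correct and takes essentially the same route as the paper's proof: stationarity makes $\bfy(\bfx^\star;i,\eta)=\bfx^\star$ and kills the third-derivative term, the Hessian collapses to $2\eta \nabla^2 f_i(\bfx^\star) E_i \nabla^2 f_i(\bfx^\star) - \eta^2 \nabla^2 f_i(\bfx^\star) E_i \nabla^2 f_i(\bfx^\star) E_i \nabla^2 f_i(\bfx^\star)$, and positive semidefiniteness follows from $\eta L_f \leq 1$ together with $E_i^2 = E_i$. The only (cosmetic) difference is the final step: the paper argues at the matrix level, using $\nabla^2 f_i(\bfx^\star) \preceq L_f I$ to bound $2E_i - \eta E_i \nabla^2 f_i(\bfx^\star) E_i \succeq E_i$, whereas you pass to the quadratic form with $w = E_i \nabla^2 f_i(\bfx^\star) v$ and use $2I - \eta \nabla^2 f_i(\bfx^\star) \succeq I$ --- the same factorization the paper itself employs in \S\ref{sec:gniconvex} for the quadratic case.
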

\begin{proof}
Let $\bfx^\star \in {\cal S}^{SNP}$.  Since $\nabla_i f_i(\bfx^\star) = 0$,
we have that $\bfy(\bfx^\star;i,\eta) = \bfx^\star$ and $\nabla^3 f_i(\bfx^\star)[E_i 
\nabla f_i(\bfy(\bfx;i,\eta))] = 0$.  Substituting in the expression for $\nabla^2 V_i(\bfx;\eta)$ 
in~\eqref{defhessV} and simplifying, we obtain
\begin{align}
	 \nabla^2 V_i(\bfx^\star;\eta)&=2\eta \nabla^2 f_i(\bfx^\star) E_i  \nabla^2  f_i(\bfx^\star) \nonumber \\
	 &\quad - \eta^2 \nabla^2 f_i(\bfx^\star) E_i  \nabla^2  f_i(\bfx^\star) E_i 
	 \nabla^2 f_i(\bfx^\star) \nonumber \\
	&= \eta \nabla^2 f_i(\bfx^\star) (2 E_i-\eta E_i \nabla^2 f_i(\bfx^\star) E_i) 
	\nabla^2  f_i(\bfx^\star).
	\label{defhessV1}
\end{align}
From the Lipschitz continuity of $f_i(\bfx)$ we have that $\nabla^2 f_i(\bfx^\star) \preceq L_f I_{n}$.  
Substituting into~\eqref{defhessV1}, we obtain 
\[\begin{aligned}
\nabla^2 V_i(\bfx^\star;\eta) \succeq& 
 \eta \nabla^2 f_i(\bfx^\star) (2 E_i - (\eta L_f) E_i^2 ) \nabla^2  f_i(\bfx^\star) \\
  \succeq& \eta \nabla^2 f_i(\bfx^\star) E_i \nabla^2  f_i(\bfx^\star) 
\end{aligned}\]
where the final simplification follows from $\eta L_f \leq 1$ and $E_i^2 = E_i$.  The claim 
follows from the positive semidefiniteness of 
$\nabla^2 f_i(\bfx^\star) E_i \nabla^2  f_i(\bfx^\star)$.  Since $\nabla^2 V(\bfx^*;\eta)$ 
is the sum of positive semidefinite matrices the claim holds.
\end{proof}

\subsection{Convexity Properties of GNI: An Example}\label{sec:gniconvex}
In this section, we present an example NE reformulation of a (non-) convex game using the GNI setup. Suppose the player's objective is quadratic, \emph{i.e.,} 
$f_i(\bfx) = \half \bfx^T \bfQ_i \bfx + \bfr_i^T \bfx$.  Then, the GNI function is 
\begin{align}
& V_i(\bfx) = f_i(x) - f_i(\bfx - \eta E_i (\bfQ_i \bfx + \bfr_i)) \\
=& \half \bfx^T \left( \bfQ_i - \widehat{\bfQ}_i^T\bfQ_i \widehat{\bfQ}_i \right) \bfx 
					   + \eta \bfr_i^TE_i \bfQ_i(I + \widehat{\bfQ}_i) \bfx \nonumber \\
					   &\,  + \half \eta \bfr_i^T (2E_i - \eta E_i\bfQ_i E_i) \bfr_i \nonumber	
\end{align}
where $\widehat{\bfQ}_i = (I - \eta E_i\bfQ_i)$.  Suppose $\|\bfQ_i\| \leq L_f$ and let
$\eta \leq \frac{1}{L_f}$, then
\begin{equation}
	(\bfQ_i-\widehat{\bfQ}_i^T\bfQ_i \widehat{\bfQ}_i) 
	=\eta(\bfQ_iE_i) (2I - \eta \bfQ_i) (E_i \bfQ_i) \succeq 0
\end{equation}
where the positive semidefiniteness holds since for all $u \neq 0$ 
$u^T (\bfQ_iE_i) (2I - \eta \bfQ_i) (E_i \bfQ_i) u$ 
= $ (\bfQ_iE_i u)^T (2I - \eta \bfQ_i) (\bfQ_iE_i u) \geq 0$.  
Hence, when $f_i(\bfx)$ is quadratic, the  GNI function is a convex, quadratic function.  
Note that the convexity of GNI function holds regardless of the convexity of the original 
function $f_i(\bfx)$. However, for general nonlinear functions $f_i(\bfx)$, the GNI function $V_i(\bfx)$ does 
not preserve convexity. 

\section{Descent Algorithm for GNI}\label{sec:algorithm}
Consider the gradient descent iteration minimizing $V(\bfx;\eta)$ 
\begin{equation}
	\bfx^{k+1} = \bfx^k - \rho \nabla V(\bfx^k;\eta) \text{ for } k = 0,1,2,\ldots \label{gradDescent}
\end{equation}
where $\rho > 0$ is a stepsize. The restrictions on $\rho$, if any, are provided in 
subsequent discussions.

Theorem~\ref{thm:descViter} proves sublinear convergence of $\{\bfx^k\}$ 
to a stationary point of GNI function based on standard analysis. 
Linear convergence to a stationary point point is shown under the 
assumption of the  Polyak-{\L}ojasiewicz inequality~\cite{Lojasiewicz63,Polyak63,PLinequality}.  
\citet{LuoTseng93} employed similar error bound conditions in the context of 
descent algorithms of variational inequalities. 

\begin{theorem}\label{thm:descViter}
Suppose $\nabla V(\bfx)$ is $L_V$-Lipschitz continuous.  Let $\rho = \frac{\alpha}{L_V}$ for 
$0 < \alpha \leq 1$. Then, the $\{\bfx^k\}$ generated 
by~\eqref{gradDescent} converges sublinearly to $\bfx^\star$ a first-order stationary point of 
$V(\bfx;\eta)$, \emph{i.e.} $\nabla V(\bfx^\star;\eta) = 0$.  If $V(\bfx;\eta) \leq \frac{1}{2\mu} 
\|\nabla V(\bfx;\eta)\|^2$ then the sequence $\{V(\bfx^k)\}$ converges linearly to 0, \emph{i.e.,} 
$\{\bfx^k\}$ converges to $\bfx^\star \in {\cal S}^{SNP}$.
\end{theorem}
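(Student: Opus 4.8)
The plan is to run the standard ``descent lemma'' machinery for smooth gradient descent and then sharpen it in two directions: using nonnegativity of $V$ for the sublinear rate, and using the Polyak-\L{}ojasiewicz (PL) bound for the linear rate. First I would invoke the $L_V$-Lipschitz continuity of $\nabla V$ to obtain the quadratic upper bound
\[ V(\bfy) \leq V(\bfx) + \nabla V(\bfx)^T(\bfy - \bfx) + \frac{L_V}{2}\norm{\bfy - \bfx}^2, \]
substitute the iteration~\eqref{gradDescent} (i.e. $\bfy = \bfx^{k+1}$, $\bfx = \bfx^k$, $\rho = \alpha/L_V$), and collect terms to reach the sufficient-decrease inequality
\[ V(\bfx^{k+1}) \leq V(\bfx^k) - \frac{\alpha}{L_V}\Bigl(1 - \frac{\alpha}{2}\Bigr)\norm{\nabla V(\bfx^k)}^2. \]
Since $0 < \alpha \leq 1$, the coefficient $c := \frac{\alpha}{L_V}\bigl(1-\frac{\alpha}{2}\bigr)$ is strictly positive, so $\{V(\bfx^k)\}$ is monotonically nonincreasing; by Theorem~\ref{thm:zeroVEqForderCrit} it is also bounded below by $0$, hence convergent.

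For the sublinear claim, I would sum the sufficient-decrease inequality over $k = 0,\ldots,K$ and telescope to get $c\sum_{k=0}^K \norm{\nabla V(\bfx^k)}^2 \leq V(\bfx^0) - V(\bfx^{K+1}) \leq V(\bfx^0)$. This yields $\min_{0\le k\le K}\norm{\nabla V(\bfx^k)}^2 \leq \frac{V(\bfx^0)}{c(K+1)} = O(1/K)$, the claimed $O(1/K)$ rate, and also forces $\norm{\nabla V(\bfx^k)} \to 0$ because the series converges. Any accumulation point $\bfx^\star$ of $\{\bfx^k\}$ then satisfies $\nabla V(\bfx^\star;\eta) = 0$ by continuity of $\nabla V$, and is therefore a first-order stationary point of the GNI function.

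For the linear claim, I would feed the PL hypothesis $V(\bfx;\eta) \leq \frac{1}{2\mu}\norm{\nabla V(\bfx;\eta)}^2$ — equivalently $\norm{\nabla V(\bfx^k)}^2 \geq 2\mu V(\bfx^k)$ — directly into the sufficient-decrease inequality to obtain the contraction $V(\bfx^{k+1}) \leq (1 - 2\mu c)V(\bfx^k)$. Iterating gives $V(\bfx^k) \leq (1 - 2\mu c)^k V(\bfx^0)$, i.e. geometric convergence of $\{V(\bfx^k)\}$ to $0$, once one verifies $0 \le 1 - 2\mu c < 1$; this holds because smoothness at the minimizer forces $\mu \leq L_V$, which keeps $2\mu c \leq \alpha(2-\alpha) \leq 1$. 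Finally, $V(\bfx^k)\to 0$ together with the lower bound $\frac{\eta}{2}\norm{\nabla_i f_i(\bfx^k)}^2 \leq V_i(\bfx^k;\eta)$ from Lemma~\ref{lemma:errBndV} forces each player's gradient to vanish in the limit, placing the limit point in ${\cal S}^{SNP}$ by Theorem~\ref{thm:zeroVEqForderCrit}.

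The main obstacle I anticipate is not the rate estimates — those are routine — but the \emph{sequence} convergence asserted in both claims. Establishing $\norm{\nabla V(\bfx^k)}\to 0$ and that accumulation points are stationary is immediate, but upgrading this to convergence of $\{\bfx^k\}$ itself to a single point $\bfx^\star$ requires extra structure: in the sublinear case one typically needs coercivity of $V$ (to guarantee a bounded, hence nonempty, accumulation set) or a Kurdyka-\L{}ojasiewicz-type argument; in the linear case the PL inequality nearly delivers it, since geometric decay of $V$ combined with the quadratic-growth consequence of PL can be used to show $\{\bfx^k\}$ is Cauchy. I would state precisely which of these supplementary assumptions is being invoked rather than leaving the iterate convergence implicit.
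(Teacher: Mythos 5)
Your proposal is correct and follows essentially the same route as the paper's proof: the descent lemma with $\rho=\alpha/L_V$ yielding sufficient decrease with the identical constant $\frac{\alpha(2-\alpha)}{2L_V}$, telescoping for the $O(1/K)$ bound on $\min_k\|\nabla V(\bfx^k;\eta)\|^2$, and substituting the Polyak-{\L}ojasiewicz bound to get the contraction $V(\bfx^{k+1};\eta)\leq\bigl(1-\alpha(2-\alpha)\frac{\mu}{L_V}\bigr)V(\bfx^k;\eta)$. Your closing caveat is well taken but is not a gap in your argument relative to the paper: the paper's own proof likewise establishes only the min-gradient-norm rate and linear decay of $V$, and asserts convergence of the iterates $\{\bfx^k\}$ to a single point $\bfx^\star$ via Theorem~\ref{thm:zeroVEqForderCrit} without the coercivity or Cauchy-sequence argument you correctly identify as necessary for that stronger claim.
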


\begin{proof}
From Lipschitz continuity of $\nabla V(\bfx;\eta)$  
\begin{equation}
\begin{aligned}
V(\bfx^{k+1};\eta) \leq& V(\bfx^k;\eta) + \nabla V(\bfx^k;\eta)^T 
(\bfx^{k+1}-\bfx^k)  \\
&+ \frac{ L_V}{2} \|\bfx^{k+1}-\bfx^k\|^2 \\
\leq& V(\bfx^k,\eta) - \rho (1  - \frac{ \rho L_V }{2} ) \|\nabla V(\bfx;\eta)\|^2 \\
\leq& V(\bfx^k;\eta) - \frac{\overline{\alpha}}{2L_V}  \|\nabla V(\bfx;\eta)\|^2 
\end{aligned}\label{descVIter}
\end{equation}
where $\overline{\alpha} = \alpha(2-\alpha)$.  
Telescoping the sum for $k = 0,...,K$, we obtain
\begin{equation}
 V(\bfx^{K+1};\eta) \leq V(\bfx^0) -  \frac{\overline{\alpha}}{2L_V} \sum\limits_{k=0}^{K} 
 \|V(\bfx^K;\eta)\|^2.
 \end{equation}
 Since $V(\bfx;\eta)$ is bounded below by $0$, we have that 
 \[\begin{aligned}
&  \frac{\overline{\alpha}}{2L_V} \sum\limits_{k=0}^{K} 
 \| \nabla V(\bfx^K;\eta)\|^2 \leq  V(\bfx^0) - V(\bfx^{K+1}) \leq V(\bfx^0) \\
 & \implies \frac{\overline{\alpha}}{2L_V} \min\limits_{k \in \{0,\ldots,K\}} \| \nabla V(\bfx^k;\eta)\|^2 
 \leq \frac{V(\bfx^0)}{K+1}.
\end{aligned} \]
This proves the claim on sublinear convergence to a first-order stationary point of $V(\bfx;\eta)$.  
Suppose $V(\bfx;\eta) \leq \frac{1}{2\mu} \|\nabla V(\bfx;\eta)\|^2$ holds.  Substituting 
in~\eqref{descVIter} obtain
\begin{equation}
\begin{aligned}
V(\bfx^{k+1};\eta) \leq& \left( 1 -  \overline{\alpha}\frac{\mu}{L_V} \right)V(\bfx^k;\eta)
\end{aligned}\label{descVIter1}
\end{equation}
which proves the claim on linear convergence of $\{V(\bfx^k;\eta)\}$ to 0.  By Theorem~\ref{thm:zeroVEqForderCrit}, $\{\bfx^k\}$ converges to 
$\bfx^\star \in {\cal S}^{SNP}$.
\end{proof}

\subsection{Quadratic Objectives}\label{sec:quadobjs}
In the following, we explore a popular setting of quadratic objective function and explore the 
implication of Theorem 2.   Note that the bilinear case is a special case of the quadratic objective.  
Consider the $f_i(\bfx)$'s to be quadratic.  For this setting  \S\ref{sec:gniconvex} showed that 
GNI function $V_i(\bfx)$ is a convex quadratic function.      
	This proves that $V_i(\bfx;\eta)$ has  
	 $(3L_f)$-Lipschitz continuous gradient.  It is well known that  
	for a composition of a linear function with a strongly convex function, we have that 
	Polyak-{\L}ojasiewicz inequality holds~\cite{LuoTseng93}, i.e., there exists $\mu > 0$ such that 
	$V(\bfx;\eta) \leq \frac{1}{2\mu} \|\nabla V(\bfx;\eta)\|^2$ holds.
    Hence, we can state the following stronger result for quadratic 
    objective functions.
\begin{corollary}
Suppose $f_i(\bfx)$ are quadratic and player convex, i.e. $f_i(\bfx)$ is 
convex in $x_i$.  Let $\rho = \frac{1}{3 L_f N}$. 
Then, the sequence $\{V(\bfx^k)\}$ converges linearly to 0, \emph{i.e.} 
$\{\bfx^k\}$ converges to $\bfx^\star \in {\cal S}^{NE}$.
\end{corollary}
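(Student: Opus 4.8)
The plan is to stitch together three ingredients already in hand: the convex-quadratic structure of $V$ from \S\ref{sec:gniconvex}, the linear-convergence half of Theorem~\ref{thm:descViter}, and the zero-set characterization of Theorem~\ref{thm:zeroVEqForderCrit}. First I would pin down the Lipschitz constant of the aggregate gradient. Since $V(\bfx;\eta)=\sum_{i=1}^N V_i(\bfx;\eta)$ and each $V_i$ is a convex quadratic whose gradient is $(3L_f)$-Lipschitz (as established preceding the corollary), the triangle inequality gives that $\nabla V$ is Lipschitz with constant $L_V = 3L_f N$. The prescribed stepsize $\rho = \frac{1}{3L_f N} = \frac{1}{L_V}$ is therefore precisely the choice $\alpha = 1$ in the hypothesis $\rho = \frac{\alpha}{L_V}$ of Theorem~\ref{thm:descViter}, which is admissible since $\alpha = 1 \in (0,1]$ (so $\overline{\alpha} = \alpha(2-\alpha) = 1$).

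Next I would invoke the Polyak-{\L}ojasiewicz inequality. Because every $f_i$ is quadratic, $V$ is a convex quadratic and hence can be written as a linear map composed with a strongly convex quadratic on the range of its Hessian; by the cited fact this guarantees a constant $\mu > 0$ with $V(\bfx;\eta) \le \frac{1}{2\mu}\norm{\nabla V(\bfx;\eta)}^2$. This places us squarely in the second regime of Theorem~\ref{thm:descViter}, whose conclusion yields linear decay $V(\bfx^{k+1};\eta) \le \bigl(1 - \mu/L_V\bigr) V(\bfx^k;\eta)$ and thus $V(\bfx^k;\eta)\to 0$ linearly.

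Finally I would translate convergence of the function values into convergence to an equilibrium. Player convexity of the quadratic $f_i$ makes each first-order stationary point a minimizer of player $i$'s subproblem, so ${\cal S}^{NE} = {\cal S}^{SNP}$, and Theorem~\ref{thm:zeroVEqForderCrit} then identifies the zero set of $V$ with ${\cal S}^{NE}$. Since $V(\bfx^k;\eta)\to 0$ and $V$ vanishes exactly on ${\cal S}^{NE}$, the iterates are driven to this set, giving $\bfx^k \to \bfx^\star \in {\cal S}^{NE}$.

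The hard part will be the Polyak-{\L}ojasiewicz step, since a merely convex quadratic need not be strongly convex — the bilinear min-max case in \S\ref{sec:gniconvex} produces a $V$ with a nontrivial null space, so one cannot simply quote strong convexity. The crux is to exhibit a uniform $\mu > 0$: factor the constant Hessian $H = \nabla^2 V \succeq 0$ as $H = M^T M$, note that $V$ depends on $\bfx$ only through $M\bfx$ via a strongly convex quadratic, and extract $\mu$ from the smallest nonzero eigenvalue of $H$. A secondary subtlety is that the linear decay of $V$ only forces $\bfx^k$ toward ${\cal S}^{NE}$ when this set is nonempty (i.e.\ an NE exists, so that $\inf V = 0$ is attained); this is exactly the hypothesis carried in the statement.
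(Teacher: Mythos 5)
Your proposal follows essentially the same route as the paper's own justification (given in the discussion of \S\ref{sec:quadobjs} preceding the corollary): each $V_i$ is a convex quadratic with $(3L_f)$-Lipschitz gradient so that $L_V = 3L_f N$ and $\rho = 1/L_V$ is the $\alpha = 1$ case of Theorem~\ref{thm:descViter}, the Polyak-{\L}ojasiewicz inequality for the convex quadratic $V$ follows from the Luo--Tseng composition fact, and Theorem~\ref{thm:zeroVEqForderCrit} together with player convexity (${\cal S}^{NE} = {\cal S}^{SNP}$) identifies the limit. Your extra care at the PL step (factoring the possibly singular Hessian as $M^TM$ and extracting $\mu$ from its smallest nonzero eigenvalue) and your caveat that an NE must exist so that $\inf V = 0$ is attained simply make explicit what the paper leaves implicit.
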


\section{Modified Descent Algorithm for GNI}\label{sec:moddescent}
The evaluation of the gradient $\nabla V(\bfx;\eta)$ requires the computation of the Hessian of the 
functions $f_i(\bfx)$ (see~\eqref{defgradV}) which can be prohibitive to compute.  
A close examination of the expression of $\nabla V(\bfx;\eta)$ in~\eqref{defgradV} 
reveals that we only require the action of the Hessian in a particular direction, i.e. 
$\nabla f_i(\bfx -\eta E_i \nabla f_i(\bfx))$.  This immediately suggests the 
use of an approximation for this term inspired by secant  
methods~\cite{NocWrightbook}
\begin{align}
&\nabla^2 f_i(\bfx) (\eta E_i \nabla f_i(\bfx - \eta E_i\nabla f_i(\bfx))) \nonumber \\
	\approx&\; \nabla f_i( \bfx + \eta E_i \nabla 
	f_i(\bfx - \eta E_i\nabla f_i(\bfx)) ) - \nabla f_i(\bfx) 
	\label{QNapprox}
\end{align}
Substituting~\eqref{QNapprox} for the term involving the Hessian in 
$\nabla V_i(\bfx;\eta)$ and simplifying obtain the direction $\nabla \widehat{V}_i(\bfx;\eta)$:
\begin{align}
    \label{modifiedGradV}
	\nabla \widehat{V}_i(\bfx;\eta) =&\; \nabla f_i(\bfx + \eta E_i \nabla 
	f_i(\bfx - \eta E_i\nabla f_i(\bfx)) ) \nonumber \\
	& - \nabla f_i(\bfx - \eta E_i \nabla f_i(\bfx)) 
\end{align}
Substituting~\eqref{QNapprox} in the gradient descent iteration~\eqref{descVIter}, we obtain the modified iteration
\begin{equation}
	\label{moddescViter}
	\bfx^{k+1} = \bfx^k - \rho \nabla \widehat{V}_i (\bfx;\eta) 
	\text{ for } k = 0,1,2,\ldots
\end{equation} 
where $\nabla \widehat{V}_i (\bfx;\eta) = \sum\limits_{i=1}^N \nabla \widehat{V}_i(\bfx;\eta)$.
We assume that the following bound on the error in the 
approximation
\begin{equation}
	\|\nabla \widehat{V}(\bfx;\eta) - \nabla V(\bfx;\eta)\| \leq \tau \|\nabla V(\bfx;\eta)\|,
	\label{bndmodgrad}
\end{equation}
for some $\tau \in (0,1)$.  
Such a bound on the error in the gradients has also been used in~\citet{LuoTseng93}.

\begin{theorem}\label{thm:moddescViter}
Suppose $\nabla V(\bfx)$ is $L_V$-Lipschitz continuous.  Let $\rho =\alpha \frac{1-\tau}{L_V(1+\tau)^2}$ for $0 < \alpha \leq 1$ and~\eqref{bndmodgrad} holds. Then, the $\{\bfx^k\}$ generated 
by~\eqref{moddescViter} converges sublinearly to $\bfx^\star$ a first-order stationary point of 
$V(\bfx;\eta)$, \emph{i.e.,} $\nabla V(\bfx^\star;\eta) = 0$.  If $V(\bfx;\eta) \leq \frac{1}{2\mu} 
\|\nabla V(\bfx;\eta)\|^2$, then the sequence $\{V(\bfx^k)\}$ converges linearly to 0, \emph{i.e.,} 
$\{\bfx^k\}$ converges to $\bfx^\star \in {\cal S}^{SNP}$.
\end{theorem}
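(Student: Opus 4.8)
The plan is to mirror the analysis of Theorem~\ref{thm:descViter}, replacing the exact gradient step by the approximate direction $\nabla\widehat{V}(\bfx^k;\eta)$ and controlling the resulting discrepancy through the error bound~\eqref{bndmodgrad}. First I would invoke the $L_V$-Lipschitz continuity of $\nabla V$ to write the descent-lemma upper bound
\[
V(\bfx^{k+1};\eta) \leq V(\bfx^k;\eta) + \nabla V(\bfx^k;\eta)^T(\bfx^{k+1}-\bfx^k) + \frac{L_V}{2}\norm{\bfx^{k+1}-\bfx^k}^2,
\]
and substitute $\bfx^{k+1}-\bfx^k = -\rho\,\nabla\widehat{V}(\bfx^k;\eta)$. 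This produces a right-hand side involving the cross term $-\rho\,\nabla V(\bfx^k;\eta)^T\nabla\widehat{V}(\bfx^k;\eta)$ and the quadratic term $\frac{L_V\rho^2}{2}\norm{\nabla\widehat{V}(\bfx^k;\eta)}^2$, both of which must be re-expressed in terms of $\norm{\nabla V(\bfx^k;\eta)}^2$ alone.

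The crux of the argument, and the only step that genuinely departs from the exact case, is bounding these two terms using~\eqref{bndmodgrad}. Writing $\nabla\widehat{V} = \nabla V + e$ with $\norm{e}\leq\tau\norm{\nabla V}$ and applying Cauchy--Schwarz, I would derive $\nabla V^T\nabla\widehat{V} \geq (1-\tau)\norm{\nabla V}^2$ for the inner product and $\norm{\nabla\widehat{V}}^2 \leq (1+\tau)^2\norm{\nabla V}^2$ for the squared norm (suppressing arguments). Combining these gives
\[
V(\bfx^{k+1};\eta) \leq V(\bfx^k;\eta) - \rho\!\left((1-\tau) - \frac{L_V\rho(1+\tau)^2}{2}\right)\norm{\nabla V(\bfx^k;\eta)}^2.
\]
Here the sign of the bracketed coefficient is exactly what the prescribed stepsize is engineered to control: substituting $\rho = \alpha\frac{1-\tau}{L_V(1+\tau)^2}$ collapses the bracket to $(1-\tau)\frac{2-\alpha}{2}$, so that
\[
V(\bfx^{k+1};\eta) \leq V(\bfx^k;\eta) - \frac{\overline{\alpha}(1-\tau)^2}{2L_V(1+\tau)^2}\norm{\nabla V(\bfx^k;\eta)}^2,
\]
with $\overline{\alpha}=\alpha(2-\alpha)>0$. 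This is the exact analogue of~\eqref{descVIter}, the contraction constant merely shrunk by the factor $\frac{(1-\tau)^2}{(1+\tau)^2}$ that quantifies the price of the approximation.

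From this point the remainder is routine and reuses the two-case conclusion of Theorem~\ref{thm:descViter} essentially verbatim. For the sublinear claim I would telescope the inequality over $k=0,\ldots,K$, use $V(\bfx;\eta)\geq 0$ from Lemma~\ref{lemma:errBndV} to bound the sum by $V(\bfx^0)$, and pass to the minimum over $k$ to obtain $\min_{k}\norm{\nabla V(\bfx^k;\eta)}^2 = O\!\left(1/(K+1)\right)$, giving convergence to a first-order stationary point of $V(\bfx;\eta)$. For the linear claim I would insert the Polyak--{\L}ojasiewicz hypothesis $\norm{\nabla V}^2 \geq 2\mu V$ into the descent inequality to get $V(\bfx^{k+1};\eta) \leq \bigl(1 - \frac{\overline{\alpha}\mu(1-\tau)^2}{L_V(1+\tau)^2}\bigr)V(\bfx^k;\eta)$, yielding geometric decay of $V$ to $0$; convergence of $\{\bfx^k\}$ to ${\cal S}^{SNP}$ then follows from Theorem~\ref{thm:zeroVEqForderCrit}. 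The only subtlety worth flagging is that the contraction factor lies in $(0,1)$, which is automatic here since $0<\overline{\alpha}\leq 1$, $\frac{(1-\tau)^2}{(1+\tau)^2}<1$, and $\mu\leq L_V$.
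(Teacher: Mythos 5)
Your proposal is correct and follows essentially the same route as the paper's own proof: the same decomposition $\nabla\widehat{V} = \nabla V + e^k$ with $\|e^k\|\leq\tau\|\nabla V\|$, the same two bounds $(1-\tau)\|\nabla V\|^2$ on the cross term and $(1+\tau)^2\|\nabla V\|^2$ on the squared norm, the same descent inequality with constant $\frac{\overline{\alpha}(1-\tau)^2}{2L_V(1+\tau)^2}$, and the same telescoping/PL two-case conclusion via Theorem~\ref{thm:zeroVEqForderCrit}. Your closing remark verifying that the contraction factor lies in $(0,1)$ is a small but welcome addition that the paper leaves implicit.
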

\begin{proof}
Let $\nabla \widehat{V}(\bfx^k;\eta) = \nabla V(\bfx^k;\eta) + e^k$.  From~\eqref{bndmodgrad}, 
$\|e^k\| \leq \tau \|\nabla V(\bfx^k;\eta)\|$.  
Applying the triangle inequality to 
$\|\nabla \widehat{V}(\bfx^k;\eta)\|$ and use~\eqref{bndmodgrad} obtain
\begin{equation}\begin{aligned}
\|\nabla \widehat{V}(\bfx^k)\| &\leq \|\nabla V(\bfx^k;\eta) + \|e^k\| \\
& \leq (1+\tau)\|\nabla V(\bfx^k;\eta)\|. 
\end{aligned}\label{bndgradwidehatV2}\end{equation}
The term $-(\nabla V(\bfx^k;\eta))^T (\nabla \widehat{V}(\bfx^k;\eta))$ can be upper 
bounded as
\begin{equation}\begin{aligned}
& -(\nabla V(\bfx^k;\eta))^T (\nabla \widehat{V}(\bfx^k;\eta)) \\
=& - \|\nabla V(\bfx^k;\eta)\|^2 - (\nabla V(\bfx^k;\eta))^T e^k \\
\leq& - \|\nabla V(\bfx^k;\eta)\|^2 + \|\nabla V(\bfx^k;\eta)\| \|e^k\| \\
\leq& -(1-\tau) \|\nabla V(\bfx^k;\eta)\|^2
\end{aligned}\label{bndcrossterm}\end{equation}
where the final inequality follows from~\eqref{bndmodgrad}.  
From Lipschitz continuity of $\nabla V(\bfx;\eta)$  
\begin{equation}
\begin{aligned}
& V(\bfx^{k+1};\eta) \\
\leq& V(\bfx^k;\eta) + \nabla V(\bfx^k;\eta)^T (\bfx^{k+1}-\bfx^k)  \\
&+ \frac{ L_V}{2} \|\bfx^{k+1}-\bfx^k\|^2 \\
\leq& V(\bfx^k,\eta) - \rho (\nabla V(\bfx^k;\eta))^T(\nabla \widehat{V}(\bfx^k;\eta)) + \\
& \frac{ L_V \rho^2}{2} \|\nabla \widehat{V}(\bfx;\eta)\|^2 \\
\leq& V(\bfx^k,\eta) - \rho (1-\tau) \|\nabla V(\bfx^k;\eta)\|^2 + \\
& \frac{ L_V \rho^2}{2} (1+ \tau)^2 \|\nabla V(\bfx;\eta)\|^2 \\
\leq& V(\bfx^k;\eta) -
  \rho \left( 1 - \tau - \frac{L_V \rho(1+\tau)^2}{2} \right) \|\nabla V(\bfx;\eta)\|^2 \\
\leq& V(\bfx^k;\eta) - \frac{\overline{\alpha}}{2} \frac{(1-\tau)^2}{ L_V (1+ \tau)^2}  \|\nabla V(\bfx;\eta)\|^2
\end{aligned}\label{descVIterm}
\end{equation}
where $\overline{\alpha} = \alpha (2-\alpha)$, 
the third inequality is obtained by  
substituting~\eqref{bndgradwidehatV2} and~\eqref{bndcrossterm}, and the 
final inequality follows from the definition of $\rho$ in the 
statement of the theorem.  
By similar arguments to those in Theorem~\ref{thm:descViter} obtain
 \[\begin{aligned}
 & \overline{\alpha} \frac{(1-\tau)^2}{2 L_V (1+ \tau)^2} \min\limits_{k \in \{0,\ldots,K\}} \| \nabla V(\bfx^k;\eta)\|^2 
 \leq \frac{V(\bfx^0)}{K+1}.
\end{aligned} \]
This proves the claim on sublinear convergence to a first-order stationary point of $V(\bfx;\eta)$.  
Suppose $V(\bfx;\eta) \leq \frac{1}{2\mu} \|\nabla V(\bfx;\eta)\|^2$ holds.  Substituting 
in~\eqref{descVIterm} obtain
\begin{equation}
\begin{aligned}
V(\bfx^{k+1};\eta) \leq& \left( 1 - \overline{\alpha} \frac{\mu (1-\tau)^2}{ L_V (1+ \tau)^2} \right)V(\bfx^k;\eta)
\end{aligned}\label{descVIterm1}
\end{equation}
which proves the claim on linear convergence of $\{V(\bfx^k;\eta)\}$ to 0.  By 
Theorem~\ref{thm:zeroVEqForderCrit}, $\{\bfx^k\}$ converges to 
$\bfx^\star \in {\cal S}^{SNP}$.
\end{proof}
The approximation in~\eqref{QNapprox} is in fact exact when the function $f_i(x)$ is quadratic. Consequently, the claims on the convergence of the iterates continue to hold when the iterates are generated 
by~\eqref{moddescViter}.






\section{Experiments}\label{sec:expts}
In this section, we present several empirical results on simulated data demonstrating the effectiveness of the proposed GNI formulation. To demonstrate the correctness of our theoretical results, we show numerical results on several simple game settings with known equilibrium. Specifically, we consider the following payoff functions: i) bilinear two-player games, ii) quadratic games with convex and non-convex payoffs, iii) linear GAN using a Dirac delta generator, and iv) a more general linear GAN with linear generator and discriminator. We compare our descent algorithm against several popular choices such as (i) gradient descent, (ii) gradient descent with Adam-style updates~\cite{kingma2014adam}, (iii) optimistic mirror descent~\cite{RakhlinSridharan,DaskalakisGAN1}, (iv) the extrapolation scheme~\cite{VIGAN}, and (v) the extra-gradient method~\cite{korpelevich1976extragradient}. For all these methods, we either follow the standard hyperparameter settings (e.g., in Adam), or we find the hyperparameters that lead to the best convergence. 
 For each of these games, we observe convergence of the proposed algorithm to stationary Nash points and contrast the quality of solutions against what can be theoretically guaranteed. As discussed in Section~\ref{sec:gniconvex}, the quadratic and bilinear cases lead to convex GNI function and thus, the game always converges to a NE. Refer to supplementary materials for extra experiments. Below, we detail each of the game settings. 
\subsection{Bi-Linear Two-player Game:} 
We consider the following two-player game:
\begin{equation}
	f_1(x) = x_1^TQx_2 + q_1^T x_1 + q_2^T x_2 = -f_2(x) \label{eq:bilinear},
\end{equation}
where $f_1$ and $f_2$ are the player's payoff functions -- a setting explored in~\cite{VIGAN}. The GNI for this game leads to a convex objective. For GNI, we use a step-size $\eta=1/L$, where $L=\norm{Q}$, and $\rho=0.01$, while for other methods we use a stepsize of $\eta=0.001$\footnote{Other values of $\eta$ did not seem to result in stable descent.}. The methods are initialized randomly -- the initialization is seen to have little impact on the convergence of GNI, however changed drastically for that of others.

\begin{figure}
    \centering
    \subfigure[]{\includegraphics[width=4cm]{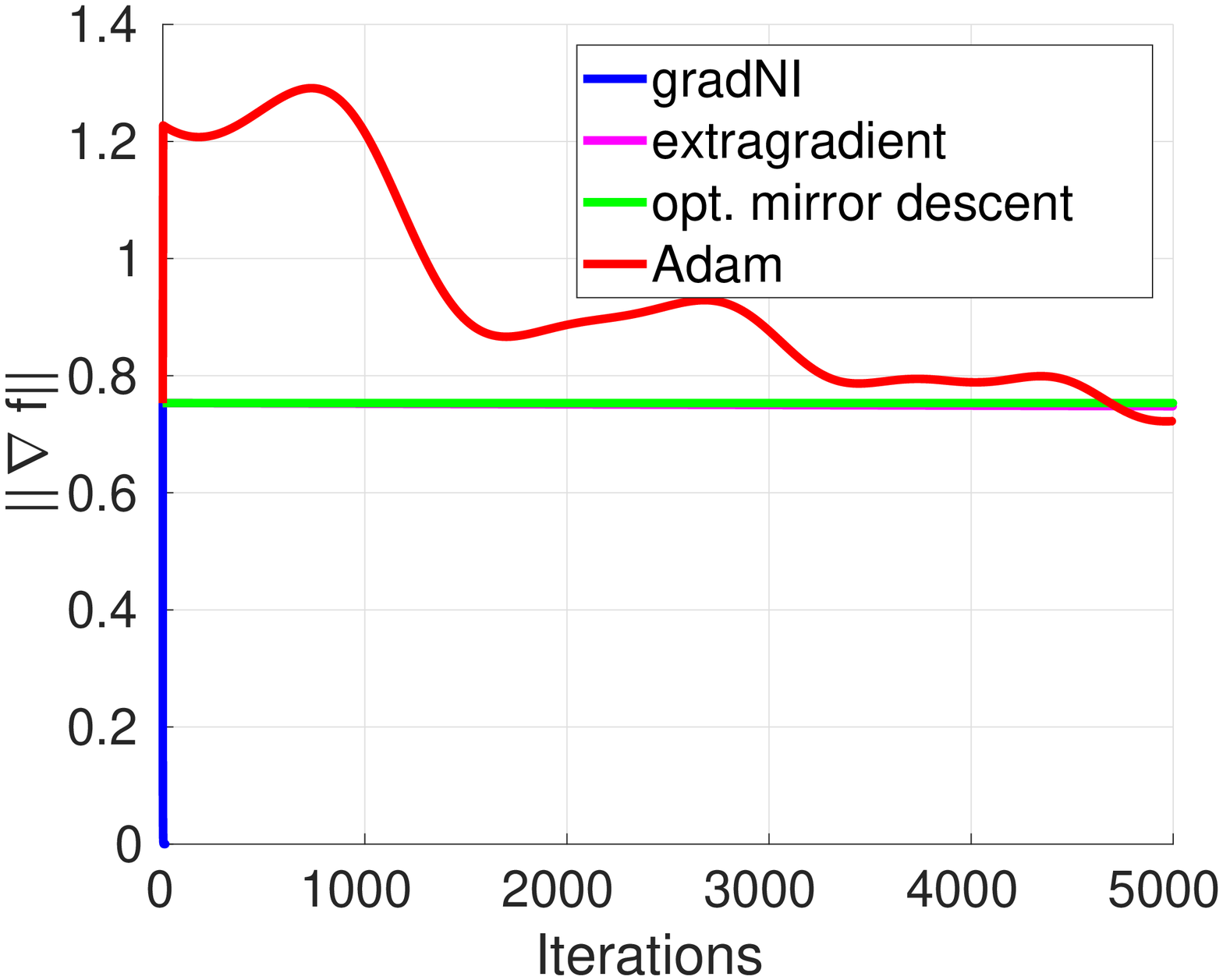}}
    \subfigure[]{\includegraphics[width=4cm]{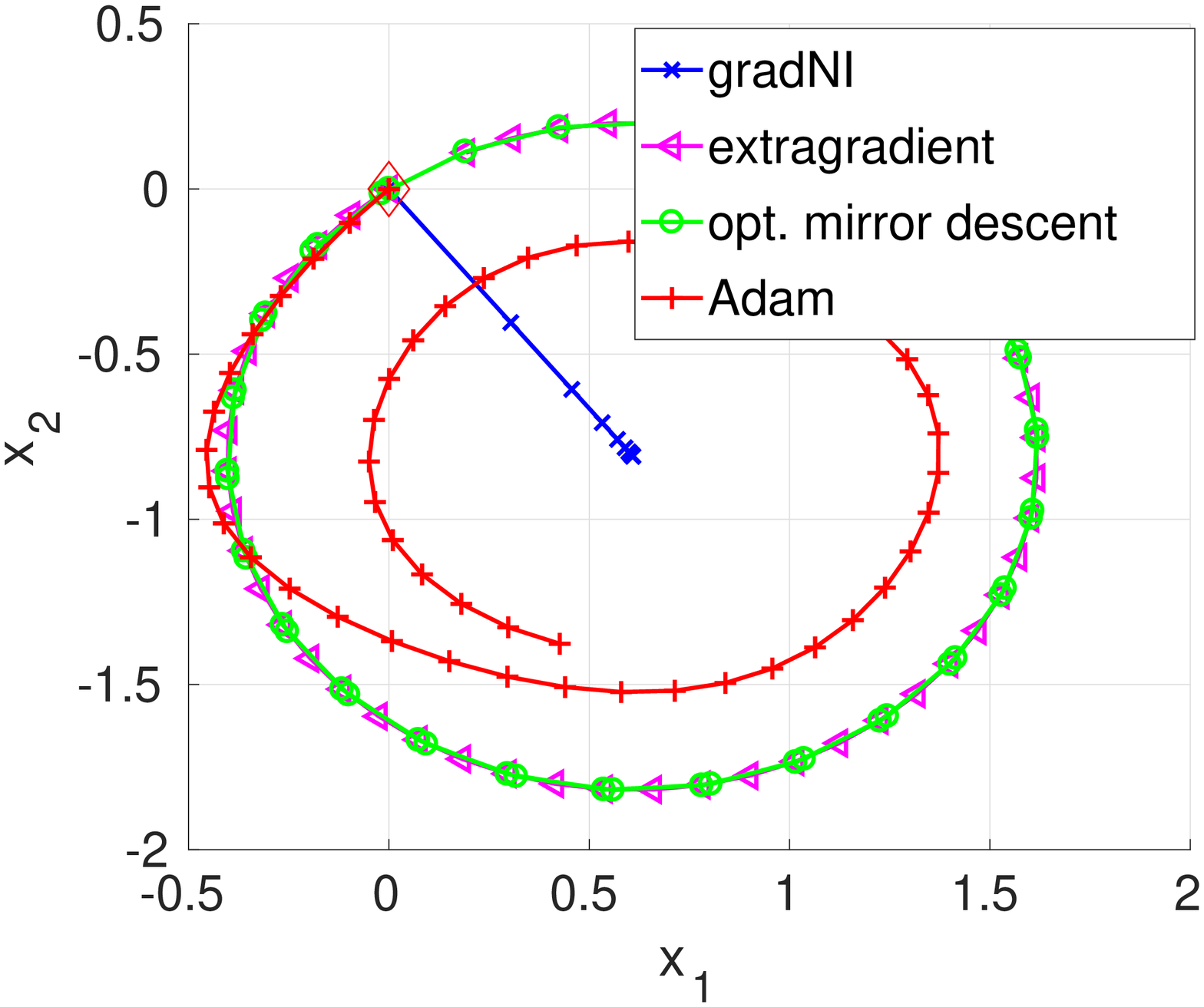}}
    \caption{(a) shows GNI against other methods for bilinear min-max game.~(b) shows convergence trajectories for 1-dimensional players. For (b), the initial point is shown in red diamond.}
    \label{fig:bilinear}
\end{figure}
In Figure~\ref{fig:bilinear}(a), we plot the gradient convergence (using 10-d data). In this plot (and all subsequent plots of gradient 
convergence), the norm of the gradient 
$\| \nabla f (\bfx^k) \| = \|(\nabla_1 f_1(\bfx^k),\ldots,\nabla_N f_N(\bfx^k)) \|$.
We see that GNI converges linearly. However, other methods, such as gradient descent and mirror descent iterates diverge, while the extragradient and Adam are seen to converge slowly. To understand the descent better, in Figure~\ref{fig:bilinear}(b), we use $x_1,x_2\in\R^1$, and plot them for every 100-th iteration starting from the same initial point (shown by the red-diamond). Interestingly, we find that the extragradient and mirror-descent methods show a circular trajectory, while Adam (with $\beta_1=0.9$ and $\beta_2=0.999$) takes a spiral convergence path. GNI takes a more straight trajectory steadily decreasing to optima (shown by the blue straight line).

\begin{figure}[!ht]
    \centering
    \subfigure[non-convex QP]{\includegraphics[width=4cm]{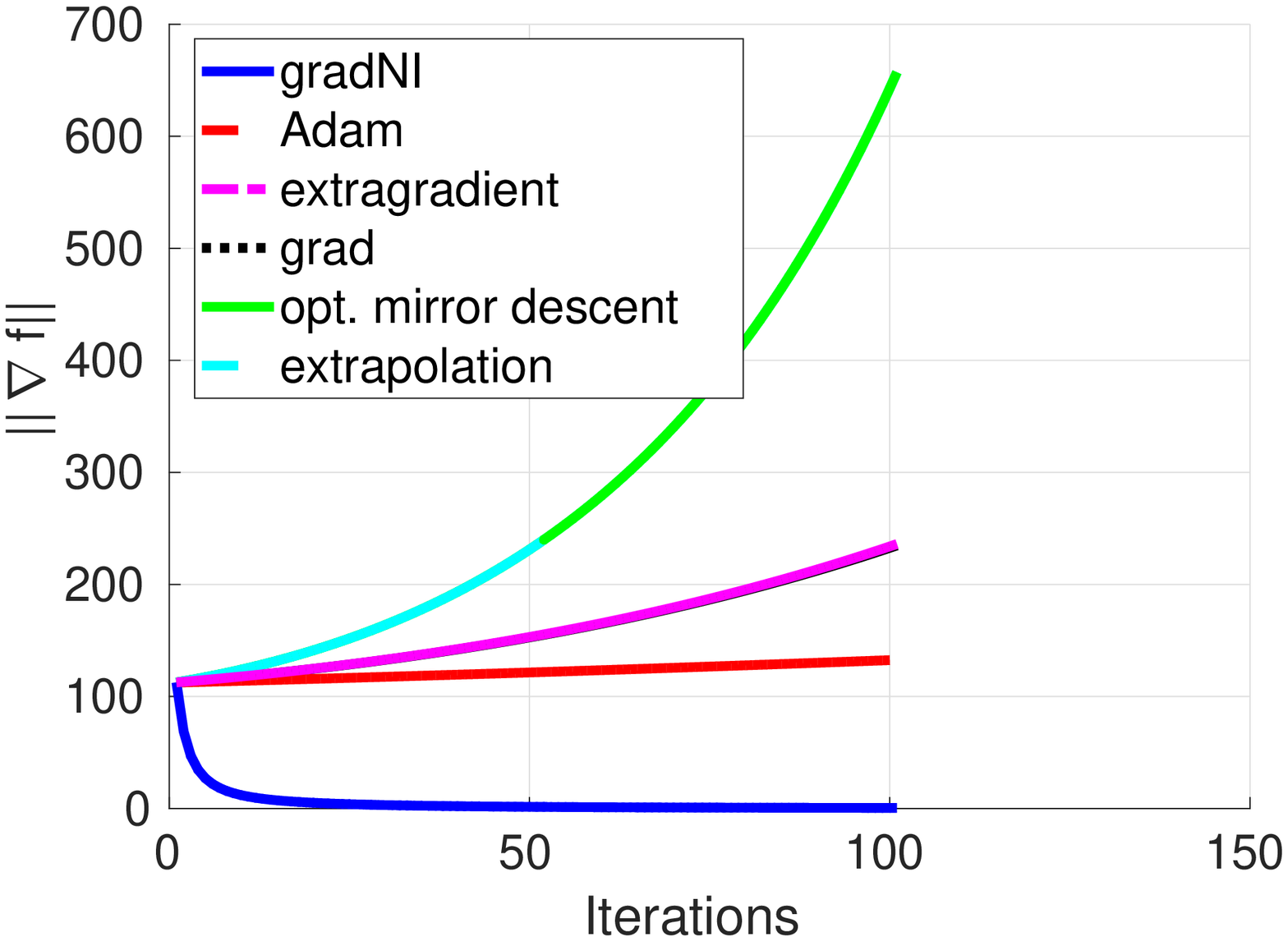}}
    \subfigure[convex-QP]{\includegraphics[width=4cm]{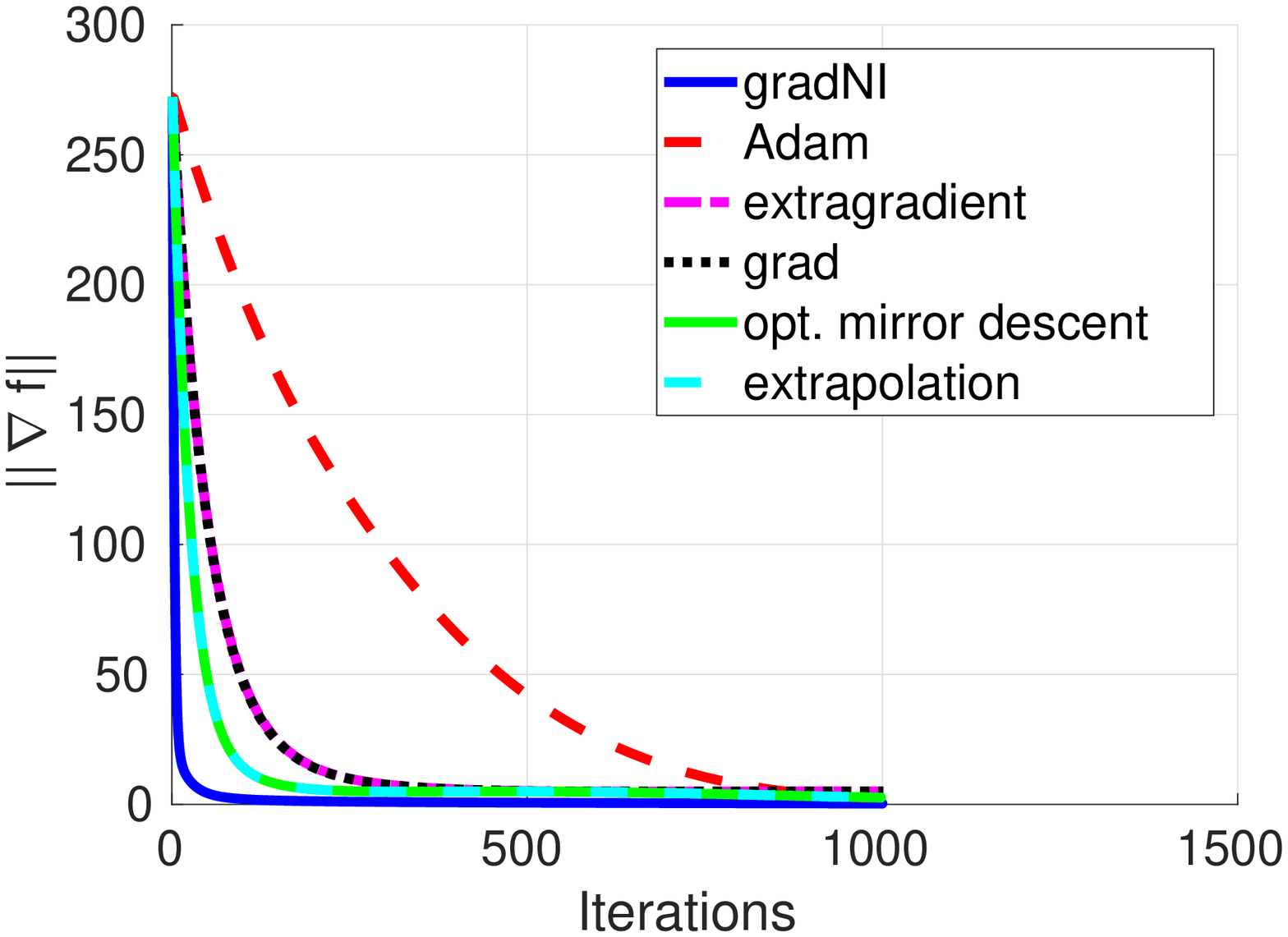}}
    \caption{Convergence of GNI against other methods for Quadratic games. (a) Non-convex QP with indefinite Q matrices for each player, (b) convex QP with semi-definite Q matrices.}
    \label{fig:quad_semidefinite}
\end{figure}
\subsection{Two-Player Quadratic Games:} 
We consider two-player games (multiplayer extensions are trivial) with the payoff functions:
\begin{equation}
f_i(x) = \half x^TQ_ix + r_i^T x\text{, for } i = 1,2 \label{eq:quadObj}
\end{equation}
where $Q_i \in \R^{n \times n}$ is symmetric. We consider cases when each $Q_i$ is indefinite (i.e., non-convex QP) and positive semi-definite. As with the bilinear case, all the QP payoffs result in convex GNI reformulations. We used 20-d data, the same stepsizes $\eta=\max_i(\norm{Q_i})$ and $\rho=0.01$ for GNI, while using $\eta=10^{-4}$ for other methods. The players are initialized from $N(0,I)$.  

In Figure~\ref{fig:quad_semidefinite}, we compare the descent on these quadratic games. We find that the competitive methods are difficult to optimize for the non-convex QP and almost all of them diverge, except Adam which converges slowly. GNI is found to converge to the stationary Nash point (as it is convex-- in~\S\ref{sec:gniconvex}). For the convex case, all methods are found to converge. To gain insight, we plot the convergence trajectory for a 1-d convex quadratic game (i.e., $x_1,x_2\in \R^1$) in Figure~\ref{fig:quad_1d_convergence}. The initializations are random for both players and the parameters are equal. We see that all schemes follow similar trajectories, except for Adam and GNI -- all converging to the same point.  

\begin{figure}[!h]
    \centering
    \includegraphics[width=4cm]{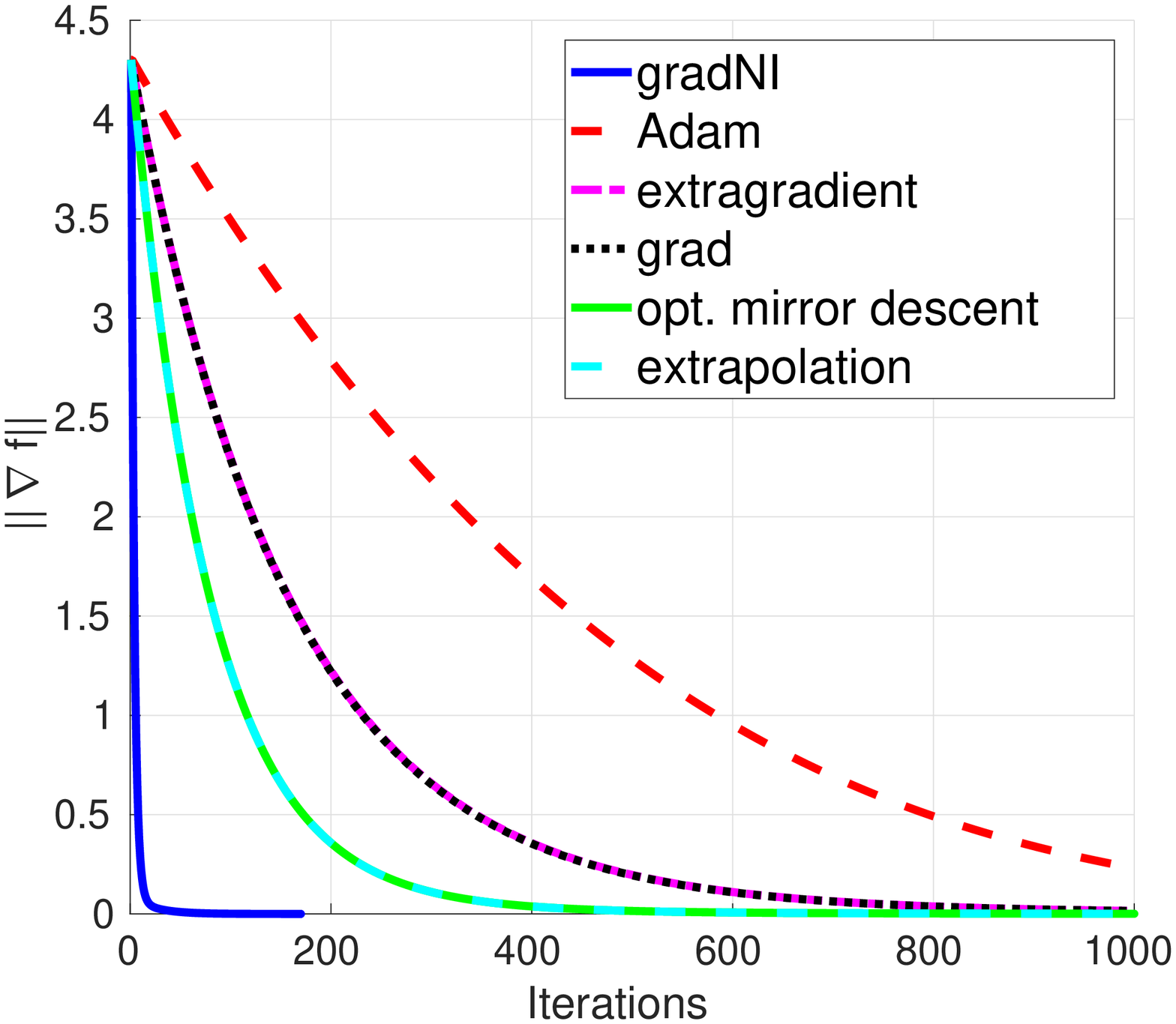}
    \includegraphics[width=4cm]{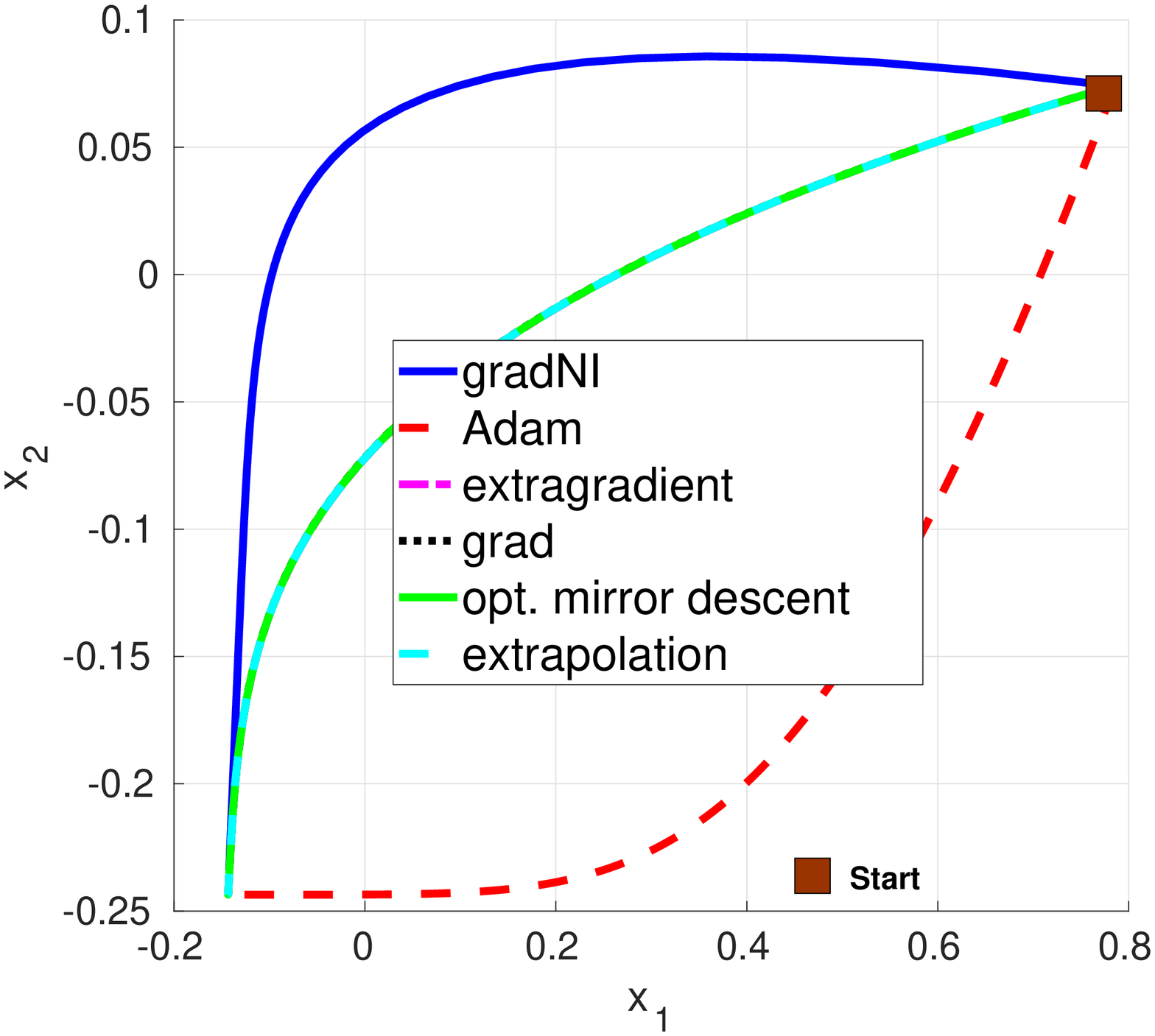}
    \caption{Convergence of GNI against other methods on a convex $1-$d quadratic game.  Left: the convergence achieved by different algorithms. Right: the trajectories of the two players to the NE.}
    \label{fig:quad_1d_convergence}
\end{figure}

\subsection{Dirac Delta GAN}
This is a one-dimensional GAN explored in~\cite{VIGAN}. In this case, the real data is assumed to follow a Dirac delta distribution (with a spike at say point -2). The payoff functions for the two players are:
\begin{align}
    f_1 =& \log(1+\exp(\theta x_1)) + \log(1+\exp(x_1 x_2))\nonumber\\
    f_2 =& -\log(1+\exp(x_1x_2)),
    \label{eq:deltagan}
\end{align}
where $\theta\in\R^1$ is the location of the delta spike. Unlike other game settings described above, we do not have an analytical formula to find the Lipscitz constant for the payoffs. To this end, we did an empirical estimate (more details to follow). We used $L=2$, $\eta=\rho=1/L$ and initialized all players uniformly from $[0,4]$.

Figure~\ref{fig:dirac-delta-gan} shows the comparison of the convergence of the dirac delta GAN game to a stationary Nash point. The GNI achieves faster convergence than all other methods, albeit having a non-convex reformulation in contrast to the bilinear and QP cases discussed above. The game has multiple local solutions and the schemes may converge to varied points depending on their initialization (see supplementary material for details). 

\begin{figure}
    \centering
    \includegraphics[width=4.5cm]{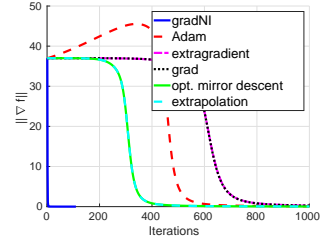}
    \caption{Convergence of GNI against other methods on the Dirac-Delta GAN.}
    \label{fig:dirac-delta-gan}
\end{figure}

\begin{figure*}[!h]
    \centering
    \subfigure[Varying $\eta$, $\rho=1$]{\includegraphics[width=4.5cm]{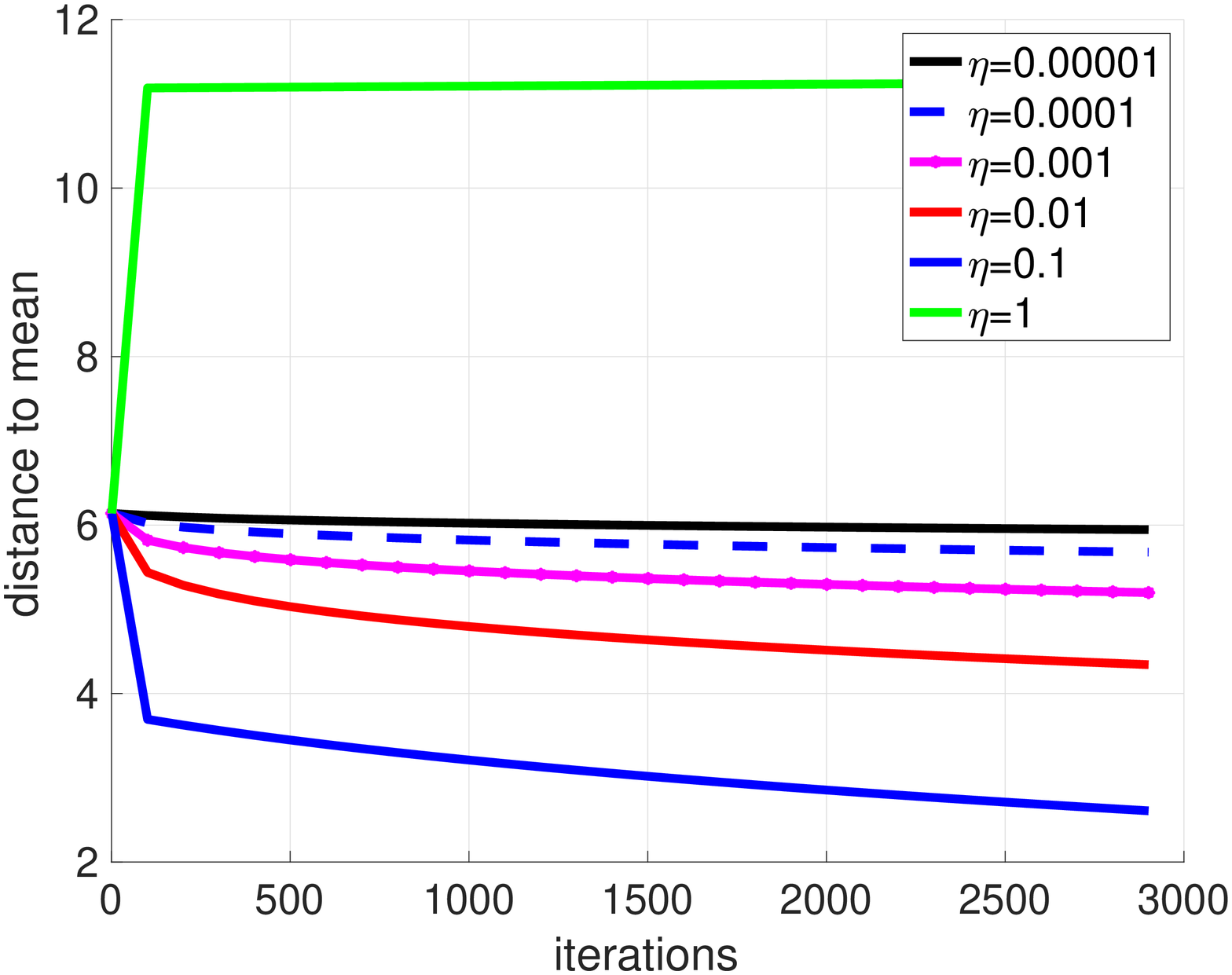}}
    \subfigure[varying $\eta$, $\rho=1$ ]{\includegraphics[width=4.5cm]{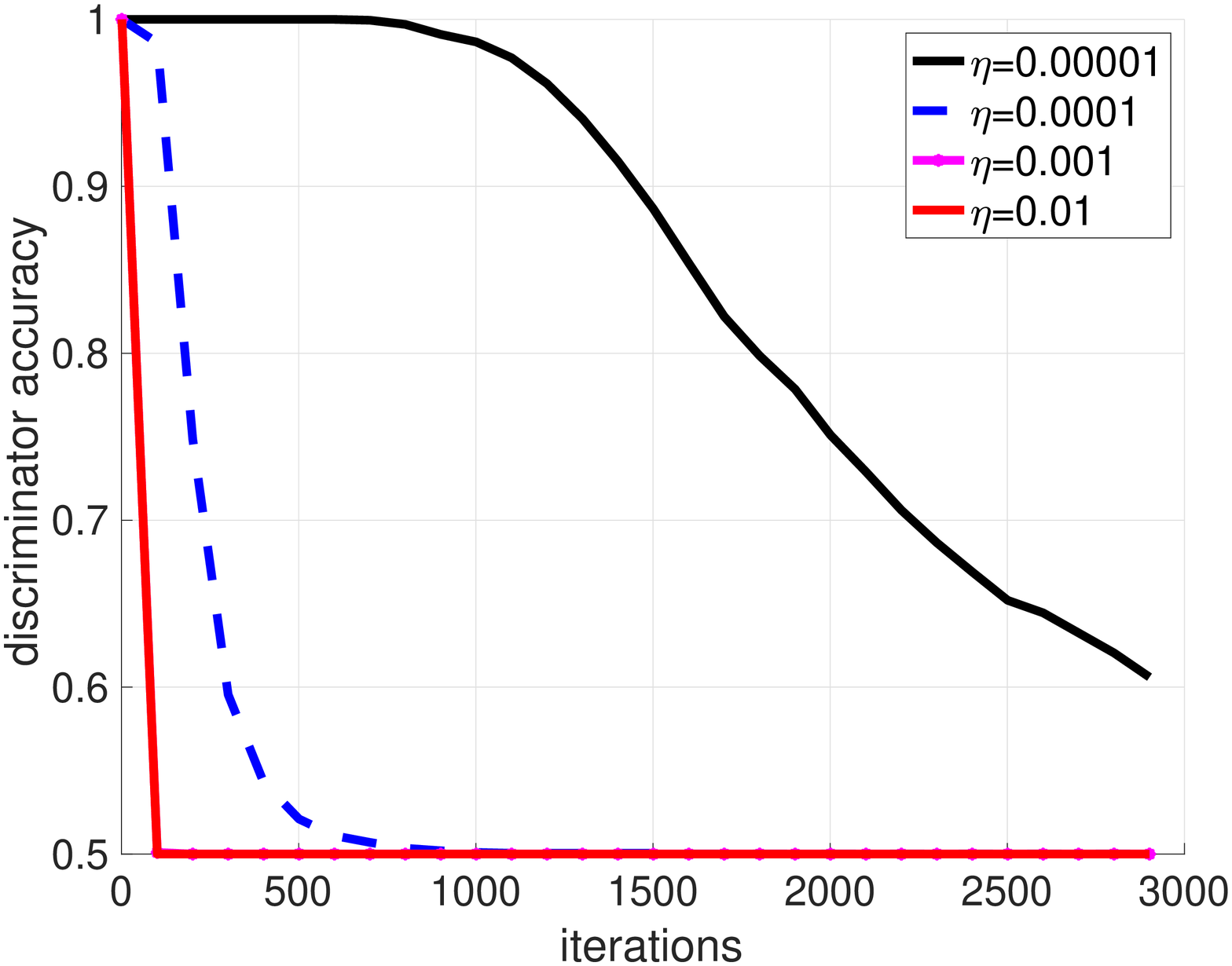}}
    \subfigure[varying $\rho$, $\eta=0.1$ ]{\includegraphics[width=4.5cm]{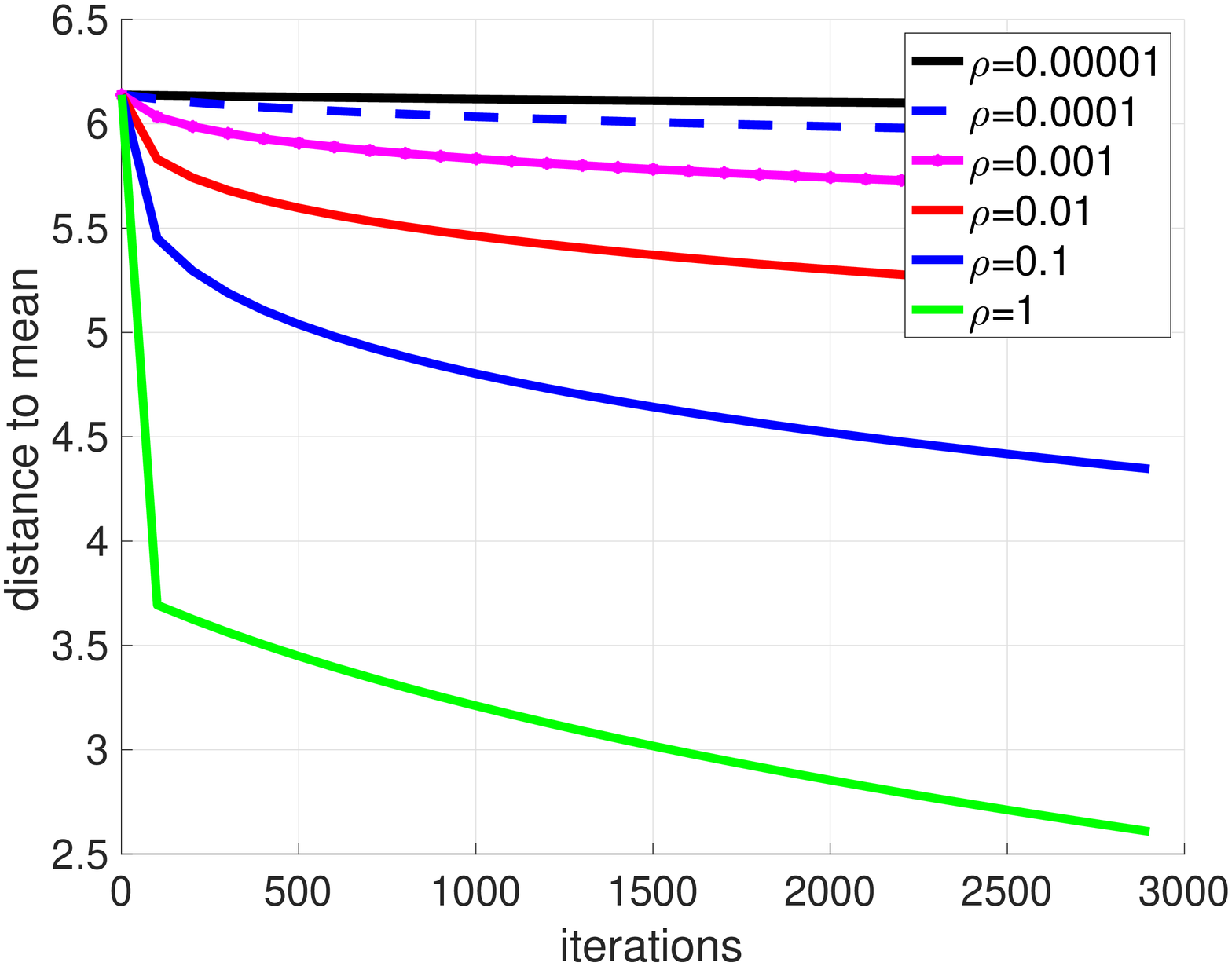}}
    \caption{Study of the influence of the step sizes ($\rho$ and $\eta$) on the convergence of GNI reformulations for the linear GAN game.}
    \label{fig:my_dcgan_parameter_study}
\end{figure*}
\begin{figure*}[!h]
    \centering
    \subfigure[Generator, $P_r=N(\mu,I)$]{\includegraphics[width=4.5cm]{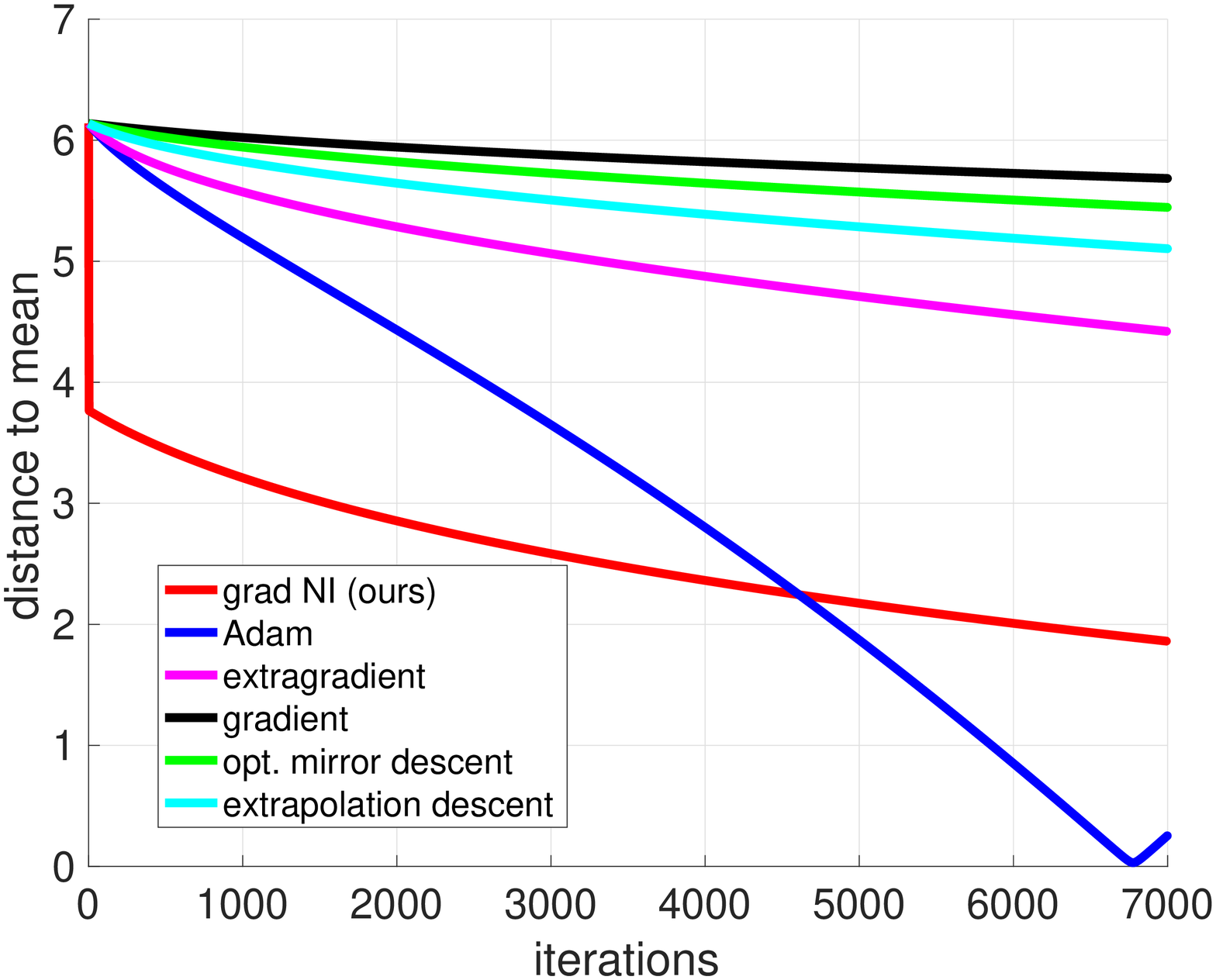}}
    \subfigure[Discriminator, $P_r=N(\mu,I)$]{\includegraphics[width=4.5cm]{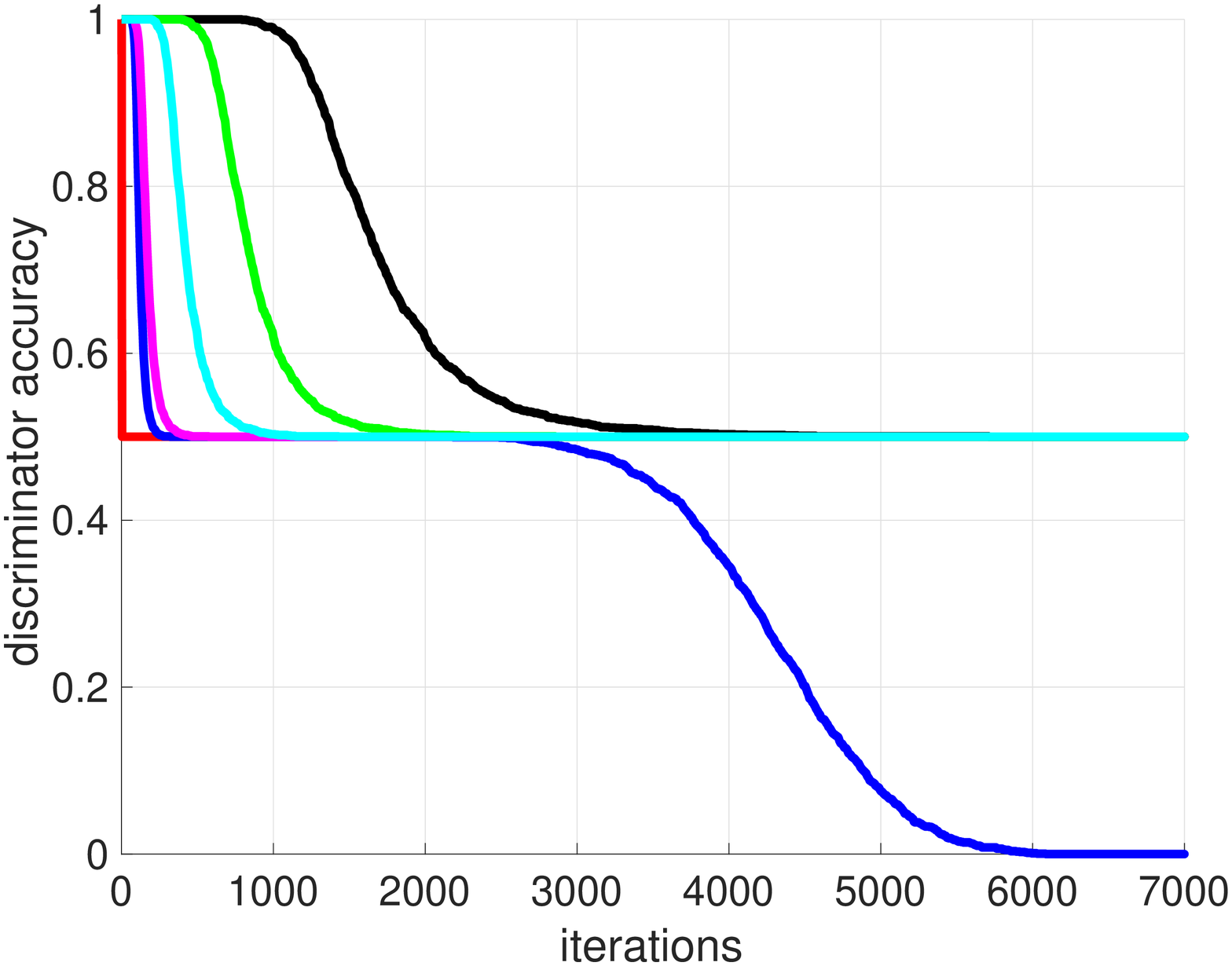}}
    \subfigure[Generator, $P_r=N(\mu,\Sigma)$]{\includegraphics[width=4.5cm]{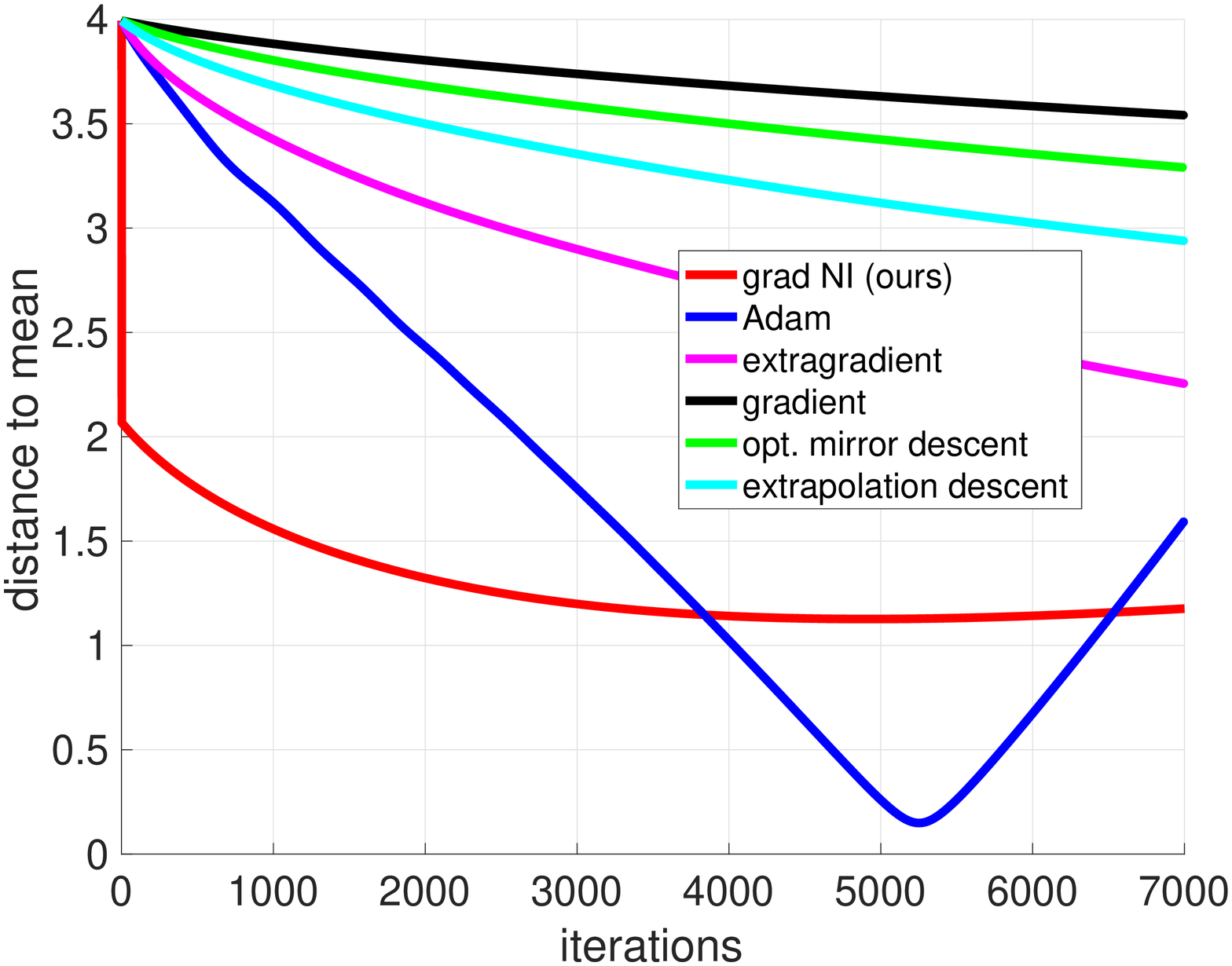}}
    \caption{Convergence of GNI against other methods on the linear GAN two-player game. The real-data distribution is sampled from $N(\mu,I)$ for (a) and (b), while we use $N(\mu,\Sigma)$ for (c), where $\Sigma=\diag(\xi),\xi\sim U(0,1]$. Note that, when the optimization converges, the discriminator is expected to be confused between the real and fake data distributions (i.e., classification accuracy is 0.5).}
    \label{fig:dcgan_convergence}
\end{figure*}

\subsection{Linear GAN}
We now introduce a more general GAN setup -- a variant of the non-saturating GAN described in~\cite{Goodfellow16}, however using a linear generator and discriminator. We designed this experiment to serve two key goals: (i) to exposit the influence of the GNI hyperparameters in a more general GAN setting, and (ii) show the performance of GNI on a setting for which it is harder to estimate a Lipschitz constant $L$. While, our proposed setting is not a neural network, it allows to understand the behavior of GNI when other non-linearities arising from the layers of a neural network are absent, and thereby study GNI in isolation. 

\noindent\paragraph*{Experimental Setup:} The payoff functions are:
\begin{align}
    f_1 =& -\!\expect_{\theta\sim \prob_r}\!\!\log\left(x_1^T\theta\right)\!-\!\expect_{z\sim P_z}\log\left(1-x_1^T\diag\left(x_2\right)z\right),\!\!\nonumber\\
    f_2 =& -\!\expect_{z\sim P_z}\log\left(x_2^T\diag\left(x_1\right)z\right),\label{eq:lineargan}
\end{align}
where $\prob_r$ and $\prob_z$ are the real and the noise data distributions, the latter being the standard normal distribution $N(0,I)$. The operator $\diag$ returns a diagonal matrix with its argument as its diagonal. We consider two cases for $\prob_r$: (i) $\prob_r=N(\mu,I)$ for a mean $\mu$ and (ii) $\prob_r=N(\mu,\Sigma)$ for a covariance matrix $\Sigma\in\R^{d\times d}$. In our experiments to follow, we use $\mu=2e$, $e$ being a $d$-dimensional vector ($d=10$) of all ones. We initialized $x_1=x_2=e/d$ for all the methods.

\noindent\paragraph*{Evaluation Metrics:} To evaluate the performance on various hyper-parameters of GNI, we define two metrics: (i) \emph{discriminator-accuracy}, and (ii) the \emph{distance-to-mean}.  The discriminator-accuracy measures how well the learned discriminator classifies the two distributions, defined as:
\begin{equation}
    \label{eq:dis_acc}
    \dist{acc}\!\!=\!\!\frac{1}{2M}\!\!\sum_{i=1}^M \mathcal{I}(x_1^T\theta_i \geq \zeta) + \mathcal{I}\left(x_1^T\!\diag(x^i_2)z_i\leq (1-\zeta)\right),\notag
\end{equation}
where $\mathcal{I}$ is the indicator function, $M$ is the number of data points sampled from the respective distributions, and $\zeta\in[0,1]$ is a threshold for the indicator function. We use $\zeta=0.7$. While $\dist{acc}$ measures the quality of the discriminator learned, it does not tell us anything on the convergence of the generator. To this end, we present another measure to evaluate the generator; specifically, the~\emph{distance-to-mean}, that computes the distance of the generated distribution from the first moment of the true distribution, defined as:
\begin{equation}
\dist{mean} = \norm{\expect_{z\sim \prob_z} {\diag(x_2)z} - \expect_{\theta\sim\prob_r} \theta} 
\end{equation}

\noindent\paragraph*{Hyper-parameter Study:} The goal of this experiment is to analyze the descent trajectory of GNI-based gradient descent when the hyper-parameters are changed. To this end, we vary $\eta$ and $\rho$ separately in the range $10^{-5}$ to $10$ in multiples of $10$, while keeping the other parameter fixed (we use $\eta=0.1$ and $\rho=1$ as the base settings). In Figure~\ref{fig:my_dcgan_parameter_study}, we plot the discriminator-accuracy and distance-to-mean against GNI iterations for the generator and discriminator separately. From Figures~\ref{fig:my_dcgan_parameter_study}(a) and (b), it appears that higher value of $\eta$ biases the descents on the generator and discriminator separately. For example, $\eta\geq 0.01$ leads to a sharp descent to the optimal solution of the discriminator, however, $\eta>1$ leads to a generator breakdown (Figure~\ref{fig:my_dcgan_parameter_study}(a)). Similarly, a small value of $\rho$, such as $\rho<10^{-5}$ shows high distance-to-mean, i.e., generator is weak, while, $\rho=1$ leads to good descents for both the generator and the discriminator. We found that a higher $\rho$ leads to unstable descent, skewing the plots and thus not shown. In short, we found that making the discriminator quickly converge to its optimum could lead to a better convergence trajectory for the generator for this linear GAN setup using the GNI scheme.

\noindent\paragraph*{Comparisons to Other Algorithms:}
In Figures~\ref{fig:dcgan_convergence}(a) and (b), we plot the distance-to-mean and discriminator-accuracy of linear GAN using $\eta=0.1$ and $\rho=1$, and compare it to all other descent schemes. Interestingly, we found that Adam shows a different pattern of convergence, with the distance-to-mean steadily decreasing to zero; on close inspection (Figure~\ref{fig:dcgan_convergence}(b)), we see that the discriminator-accuracy simultaneously goes to zero as well, suggesting the non-optimality of the descent. In contrast, our GNI converges quickly. In Figure~\ref{fig:dcgan_convergence}(c), we plot the convergence when using a real data distribution $P_r=N(\mu,\Sigma)$, where $\mu\sim U(0,1)^d$; a d-dimensional uniform distribution and $\Sigma$ is a randomly-sampled diagonal covariance matrix. The descent in this general setting also looks similar to the one in Figure~\ref{fig:dcgan_convergence}(a). 

\section{Conclusions}
We presented a novel formulation for Nash equilibrium computation in multi-player games by introducing the Gradient-based Nikaido-Isoda (GNI) function. The GNI formulation for games allows individual players to locally improve their objectives using steepest descent while preserving local stability and convergence guarantees. We showed that the GNI function is a valid merit function for multi-player games and presented an approximate descent algorithm. We compared our method against several popular descent schemes on multiple game settings and empirically demonstrated that our method outperforms all other techniques. Future research will explore the GNI method in stochastic settings, that may enable their applicability to GAN optimization.

\bibliography{gni}
\bibliographystyle{icml2019}

 \appendix
 
\section{Residual Minimization}
Lemma 1 (in the main paper) also suggests another possible function for minimization, namely 
$\Phi(\bfx) = \frac{1}{2} \sum\limits_{i=1}^2 \|\nabla_i f_i(\bfx)\|^2$.  We can state a 
result that is analogous to Theorem 1.

\begin{theorem}\label{thm:zeroVEqForderCrit1}
The global minimizers of $\Phi(\bfx)$ are all first-order NE points, i.e., 
$\{x^\star \,|\, \Phi(\bfx^\star) = 0\} = {\cal S}^{SNP}$.  If the individual functions $f_i$ are convex then the global minimizers 
of $\Phi(\bfx)$ are precisely the set ${\cal S}^{NE}$. 
\end{theorem}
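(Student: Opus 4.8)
The plan is to follow the template of the proof of Theorem~\ref{thm:zeroVEqForderCrit} essentially verbatim, since $\Phi$ is engineered to have exactly the same zero set as $V(\bfx;\eta)$. First I would observe that $\Phi(\bfx) = \frac{1}{2}\sum_i \norm{\nabla_i f_i(\bfx)}^2$ is manifestly nonnegative for every $\bfx \in \R^n$, being a sum of squared norms. Note that, unlike the nonnegativity of $V(\bfx;\eta)$, which relied on the lower bound of Lemma~\ref{lemma:errBndV}, this step needs no auxiliary estimate.

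The core step is to characterize the zero set of $\Phi$. Because each summand $\frac{1}{2}\norm{\nabla_i f_i(\bfx)}^2$ is itself nonnegative, $\Phi(\bfx^\star) = 0$ holds if and only if every term vanishes, i.e.\ $\nabla_i f_i(\bfx^\star) = 0$ for all $i$. This is precisely the defining condition of ${\cal S}^{SNP}$, so $\{\bfx^\star \,|\, \Phi(\bfx^\star) = 0\} = {\cal S}^{SNP}$. Since $\Phi \geq 0$, the value $0$ is its global minimum, and hence the set of global minimizers (when nonempty) coincides with this zero set, which establishes the first claim.

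For the second claim I would invoke the same fact used in Theorem~\ref{thm:zeroVEqForderCrit}: when each $f_i$ is convex in $x_i$, the stationarity condition $\nabla_i f_i(\bfx^\star) = 0$ is not only necessary but also sufficient for $x_i^\star$ to globally minimize player $i$'s subproblem, so that ${\cal S}^{SNP} = {\cal S}^{NE}$. Substituting this identity into the set equality just established yields that the global minimizers of $\Phi$ are precisely ${\cal S}^{NE}$.

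I expect no genuine obstacle here: the result is immediate from the definition of $\Phi$, and is arguably simpler than Theorem~\ref{thm:zeroVEqForderCrit} because $\Phi$ encodes the residual directly rather than through the Cauchy-point surrogate $\bfy(\bfx;i,\eta)$. The only point worth stating carefully is that the equality of zero sets holds unconditionally, whereas the phrase ``global minimizers'' implicitly presumes ${\cal S}^{SNP} \neq \emptyset$. Alternatively, the same conclusion can be read off directly from Lemma~\ref{lemma:errBndV}, since the sandwiching bounds $\frac{\eta}{2}\norm{\nabla_i f_i(\bfx)}^2 \leq V_i(\bfx;\eta) \leq \frac{3\eta}{2}\norm{\nabla_i f_i(\bfx)}^2$ show that $V_i(\bfx;\eta)$ and $\norm{\nabla_i f_i(\bfx)}^2$ share the same zeros, so the two merit functions $\Phi$ and $V$ have identical global minimizers.
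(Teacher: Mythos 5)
Your proposal is correct and matches the paper's (implicit) approach exactly: the paper states this appendix theorem without proof, presenting it as ``analogous to Theorem~\ref{thm:zeroVEqForderCrit},'' and your argument---nonnegativity of the sum of squares, the zero set coinciding with ${\cal S}^{SNP}$ term by term, and ${\cal S}^{SNP} = {\cal S}^{NE}$ under player convexity---is precisely the Theorem~\ref{thm:zeroVEqForderCrit} proof template transplanted, made even simpler since no analogue of Lemma~\ref{lemma:errBndV} is needed for nonnegativity. Your closing observations (the nonemptiness caveat on ``global minimizers'' and the alternative route through the sandwich bounds of Lemma~\ref{lemma:errBndV}) are accurate but inessential refinements.
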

Denote by $F(\bfx) = \begin{bmatrix} \nabla_1 f_1(\bfx) \\ \nabla_2 f_2(\bfx) \end{bmatrix}$ 
the vector function of the first-order stationary conditions for each of the players.  So 
$\Phi(\bfx) = \frac{1}{2} \|F(\bfx)\|^2$.  
The gradient of $\Phi(\bfx)$ is given by 
\begin{equation}
\begin{aligned}
\nabla \Phi(\bfx) &=  \nabla F(\bfx) F(\bfx) \\
&=  \begin{bmatrix} 
	\nabla^2_{11} f_1(\bfx)  & \nabla^2_{12} f_2(\bfx) \\
	\nabla^2_{21} f_1(\bfx)  & \nabla^2_{22} f_2(\bfx) 
\end{bmatrix} \begin{bmatrix}
	\nabla_1 f_1(\bfx) \\
	\nabla_2 f_2(\bfx) 
\end{bmatrix}.
\end{aligned}
\label{defGradPhi}\end{equation}
The Hessian of the function $\Phi(\bfx)$ is 
\begin{equation}
	\nabla^2 \Phi(\bfx) =  \left( \sum\limits_{j=1}^n F_j(\bfx) \nabla^2 F_j(\bfx)  + 
	\nabla F(\bfx) \nabla F(\bfx)^T \right).
	\label{defHessPhi}
\end{equation}

Consider the gradient descent iteration for minimizing $\Phi(\bfx)$ with stepsize 
$\rho > 0$
\begin{equation}
	x^{k+1} = x^k - \rho \nabla \Phi(\bfx^k).	\label{gradDescentPhi}
\end{equation}

We can state the following convergence result for the gradient descent iterations.

\begin{theorem}
Suppose $\nabla \Phi(\bfx)$ is $ L_{\Phi}$-Lipschitz continuous.  
Let $\rho = \frac{1}{ L_{\Phi}}$. Then, the $\{x^k\}$ generated 
by~\eqref{gradDescentPhi} converges sublinearly to $x^\star$ a first-order critical point of 
$\Phi(\bfx)$, $\nabla \Phi(\bfx^\star) = 0$.  If $\Phi(\bfx) \leq \frac{1}{2\mu} 
\|\nabla \Phi(\bfx)\|^2$ then the sequence $\{x^k\}$ converges linearly to a 
$x^\star \in {\cal S}^{SNP}$.
\end{theorem}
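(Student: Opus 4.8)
The plan is to replay the argument of Theorem~\ref{thm:descViter} essentially verbatim, with $\Phi$ in place of $V$ and $L_{\Phi}$ in place of $L_V$; the only structural facts I need are that $\Phi$ is bounded below and that its zero set is exactly ${\cal S}^{SNP}$, both of which are already in hand. First I would invoke the descent lemma implied by $L_{\Phi}$-Lipschitz continuity of $\nabla\Phi$, namely
\[
\Phi(\bfx^{k+1}) \leq \Phi(\bfx^k) + \nabla\Phi(\bfx^k)^T(\bfx^{k+1}-\bfx^k) + \frac{L_{\Phi}}{2}\|\bfx^{k+1}-\bfx^k\|^2,
\]
and substitute the update $\bfx^{k+1}-\bfx^k = -\rho\,\nabla\Phi(\bfx^k)$ with $\rho = 1/L_{\Phi}$. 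Since $\rho\big(1-\tfrac{\rho L_{\Phi}}{2}\big) = \tfrac{1}{2L_{\Phi}}$, the right-hand side collapses to $\Phi(\bfx^k) - \tfrac{1}{2L_{\Phi}}\|\nabla\Phi(\bfx^k)\|^2$, giving a guaranteed per-step decrease proportional to the squared gradient norm.

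For the sublinear claim I would telescope this inequality over $k=0,\ldots,K$ and use $\Phi(\bfx) = \tfrac12\|F(\bfx)\|^2 \geq 0$ to bound the accumulated decrease by $\Phi(\bfx^0)$, exactly as in~\eqref{descVIter}. Lower-bounding the sum by $(K+1)$ times its smallest term then yields
\[
\min_{k\in\{0,\ldots,K\}} \|\nabla\Phi(\bfx^k)\|^2 \leq \frac{2L_{\Phi}\,\Phi(\bfx^0)}{K+1},
\]
so the best gradient norm among the first $K+1$ iterates vanishes at rate $O(1/\sqrt{K})$, which is the asserted sublinear convergence to a first-order critical point of $\Phi$.

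For the linear claim I would feed the Polyak--{\L}ojasiewicz hypothesis $\Phi(\bfx) \leq \tfrac{1}{2\mu}\|\nabla\Phi(\bfx)\|^2$, equivalently $\|\nabla\Phi(\bfx^k)\|^2 \geq 2\mu\,\Phi(\bfx^k)$, back into the per-step decrease to obtain $\Phi(\bfx^{k+1}) \leq \big(1 - \mu/L_{\Phi}\big)\,\Phi(\bfx^k)$, a geometric recursion forcing $\Phi(\bfx^k)\to 0$ linearly. Because $\Phi = \tfrac12\|F\|^2$, this drives $\|F(\bfx^k)\|\to 0$, and by Theorem~\ref{thm:zeroVEqForderCrit1} the limit lies in ${\cal S}^{SNP}$.

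There is essentially no analytic obstacle here, since the argument is a transcription of Theorem~\ref{thm:descViter}; the one point that warrants care is interpretational. What the telescoping argument actually certifies in the general (non-PL) case is a rate on $\min_k\|\nabla\Phi(\bfx^k)\|$, not convergence of the entire iterate sequence to a single point, so the phrase \emph{converges to a critical point} should be read in that best-iterate sense. The genuine upgrade to convergence of $\{\bfx^k\}$ toward ${\cal S}^{SNP}$ is supplied entirely by the PL inequality, which converts the decrease into linear decay of $\Phi$ and hence of $\|F\|$; establishing that such a $\mu$ exists (as for the affine $F$ arising from quadratic or bilinear payoffs) is the substantive modeling assumption rather than part of the convergence argument itself.
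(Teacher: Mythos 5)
Your proposal matches the paper's own proof essentially step for step: the same descent lemma with $\rho = 1/L_{\Phi}$ yielding the per-step decrease $\tfrac{1}{2L_{\Phi}}\|\nabla\Phi(\bfx^k)\|^2$, the same telescoping argument for the sublinear rate, and the same substitution of the Polyak--{\L}ojasiewicz inequality to get the geometric contraction $\Phi(\bfx^{k+1}) \leq (1-\mu/L_{\Phi})\Phi(\bfx^k)$. Your closing remark that the non-PL conclusion is really a best-iterate rate on $\min_k\|\nabla\Phi(\bfx^k)\|$ rather than convergence of the full iterate sequence is a fair (and slightly more careful) reading of what the paper's argument actually establishes.
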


\begin{proof}
From Lipschitz continuity of $\nabla \Phi(\bfx)$ 
\begin{equation}
\begin{aligned}
\Phi(\bfx^{k+1}) \leq& \Phi(\bfx^k) + \nabla \Phi(\bfx^k)^T 
(\bfx^{k+1}-x^k)  \\
&+ \frac{ L_{\Phi}}{2} \|x^{k+1}-x^k\|^2 \\
\leq& \Phi(\bfx^k,) - \rho (1  - \frac{ \rho L_{\Phi} }{2} ) \|\nabla \Phi(\bfx)\|^2 \\
\leq& \Phi(\bfx^k) - \frac{1}{2 L_{\Phi}}  \|\nabla \Phi(\bfx)\|^2. 
\end{aligned}\label{descPhiIter}
\end{equation}
Telescoping the sum and $k = 0,...,K$ obtain
\begin{equation}
 \Phi(\bfx^{K+1}) \leq \Phi(\bfx^0) -  \frac{1}{2 L_{\Phi}} \sum\limits_{k=0}^{K} 
 \|\Phi(\bfx^K)\|^2.
 \end{equation}
 Since $\Phi(\bfx)$ is bounded below by $0$ we have that 
 \[\begin{aligned}
 &\frac{1}{2 L_{\Phi}} \sum\limits_{k=0}^{K} 
 \| \nabla \Phi(\bfx^K)\|^2 \leq  \Phi(\bfx^0) - \Phi(\bfx^{K+1}) \leq \Phi(\bfx^0) \\
 &\implies \frac{1}{2 L_{\Phi}} \min\limits_{k \in \{0,\ldots,K\}} \| \nabla \Phi(\bfx^k)\|^2 
 \leq \frac{\Phi(\bfx^0)}{K+1}.
\end{aligned} \]
This proves the claim on sublinear convergence to a first-order stationary point of $\Phi(\bfx)$.  
Suppose $\Phi(\bfx) \leq \frac{1}{2\mu} \|\nabla \Phi(\bfx)\|^2$ holds.  Substituting 
in~\eqref{descPhiIter} obtain
\begin{equation}
\begin{aligned}
\Phi(\bfx^{k+1}) \leq& \left( 1 -  \frac{\mu}{L_{\Phi}} \right)\Phi(\bfx)
\end{aligned}\label{descPhiIter1}
\end{equation}
which proves the claim on linear convergence.
\end{proof}

In the following we provide specific conditions under which the bound 
$\Phi(\bfx) \leq \frac{1}{2\mu} \|\nabla \Phi(\bfx)\|^2$ holds. 
\begin{itemize}
	\item Suppose the function $f_i$ are quadratic then the discussion following Theorem 3 applies.
	\item	Suppose the function $F(x)$ is strongly monotone,  
	$(F(x) - F(\hat{x}))^T(x-\hat{x}) \geq \beta \|x - \hat{x}\|^2$.  This implies that the $f_i(x)$ 
	are $\beta$-strongly convex.  Then, it follows that $\nabla F(x) \succeq \beta I_n$ for 
	all $x \in \R^n$.  This also provides the following bound 
	\begin{equation}
		\|\nabla \Phi(x;\eta)\|^2 \geq ( \beta)^2 \|F(x)\|^2 = 2\beta^2 \Phi(x).
	\end{equation}
	Hence, $\mu = \beta^2$.
\end{itemize}


\section{Additional Experiments}
In this Section, we present more empirical results for the four different games that were discussed in the main paper which help understand the convergence behavior of the proposed method. More concretely, the results validate the results for convergence rate and the quality of solutions for the different games discussed in the main paper.
\subsection{Convergence Rate for Bilinear and Quadratic Games:}
We provide plots that suggest linear convergence rate for bilinear and strongly-convex quadratic games as was described in the main paper. For both cases we use $20$-d variables for both players which are initialized arbitrarily. From the plots shown in Figure~\ref{fig:convergence_rate}, we observe that $V$ function decays linearly to close to zero and then it slows down as the gradient of $V$ starts to vanish (suggested by Theorem $2$ in the main paper). It is noted that the guarantees for linear convergence are for the $V$ function (and not for $\nabla f$) and thus we skip plots for $\nabla f$.
\begin{figure}[!h]
    \centering
    \includegraphics[width=4cm]{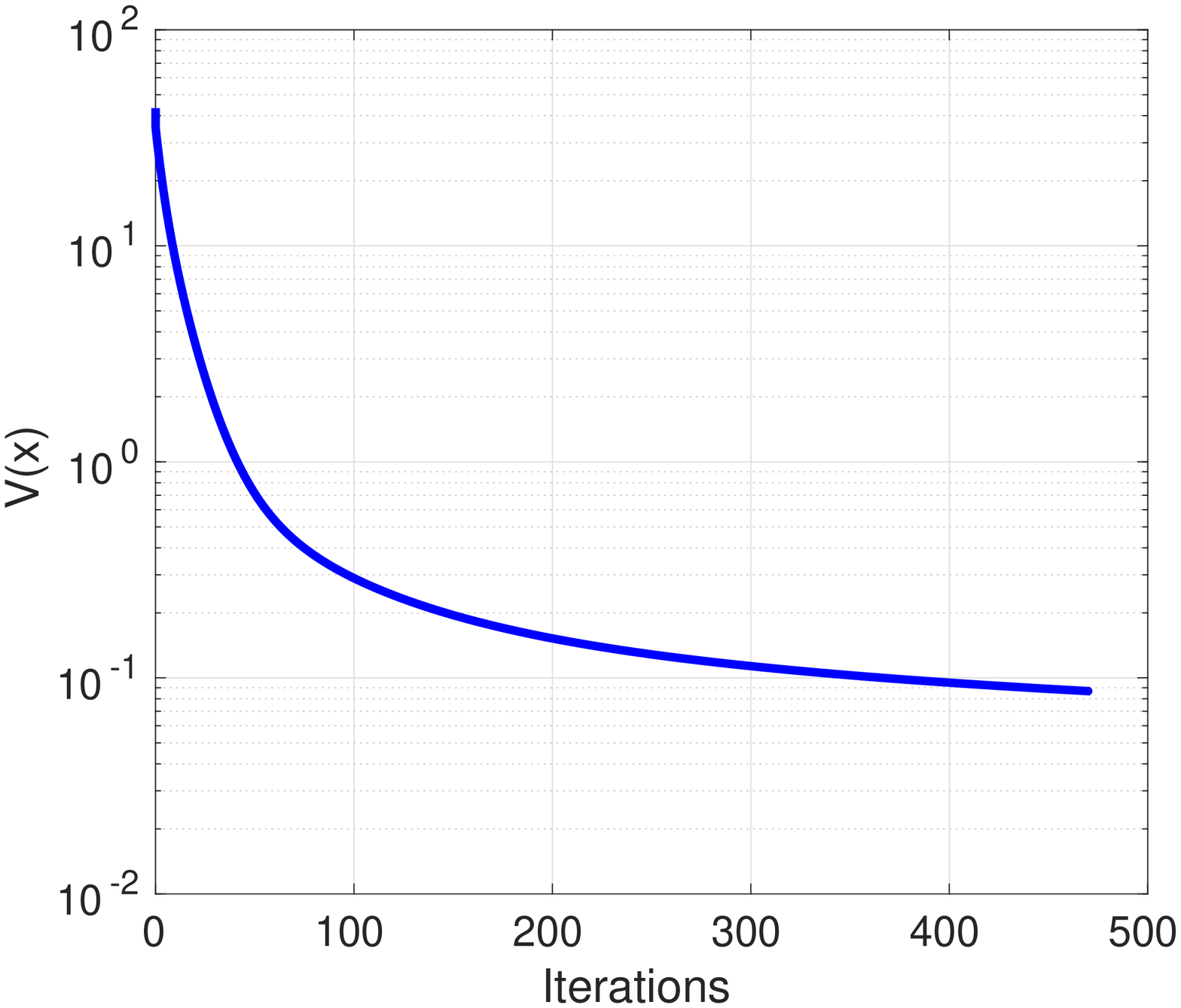}
    \includegraphics[width=4cm]{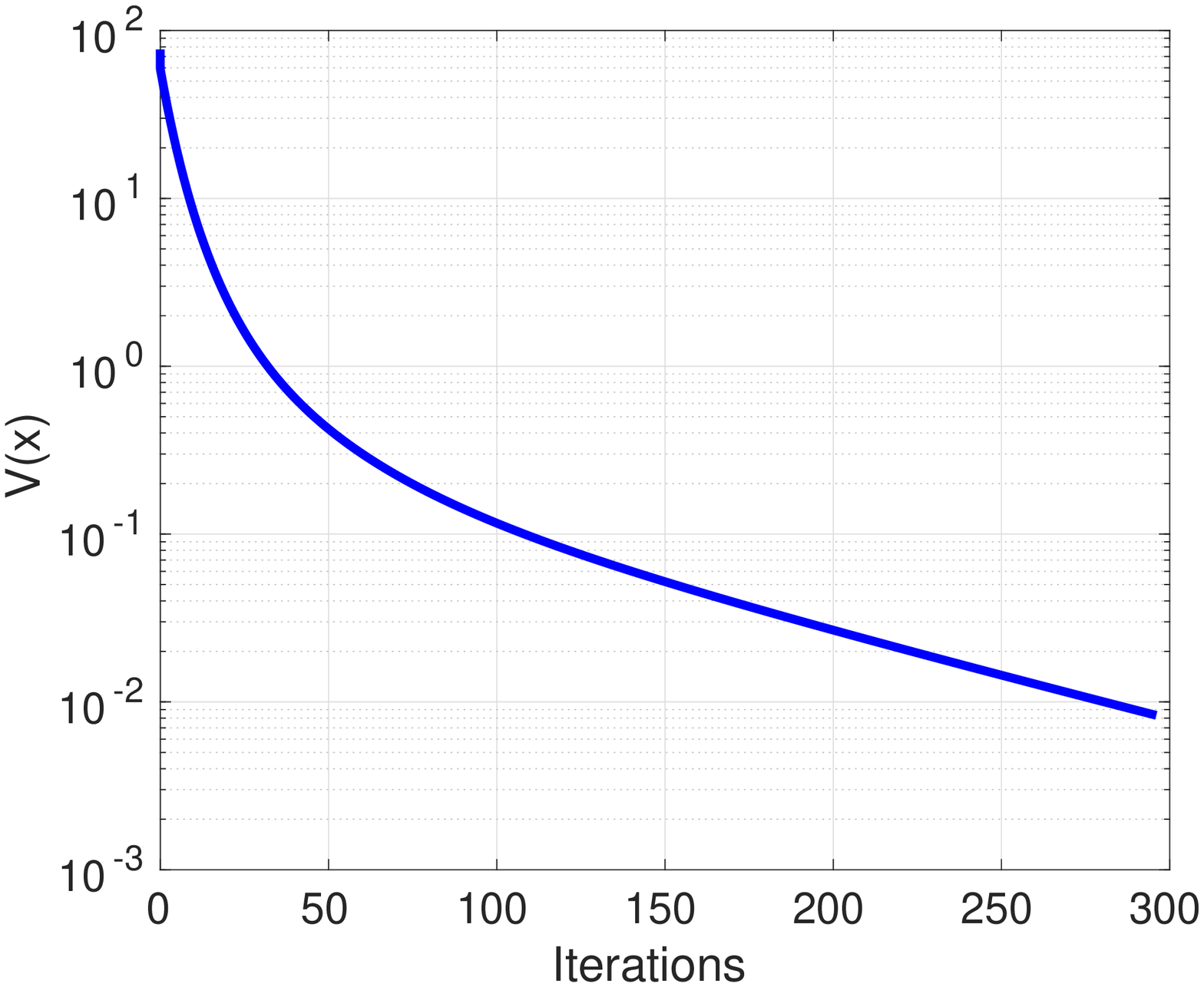}
    \caption{Convergence rate for bilinear and convex quadratic games using the GNI method. Left: Decay of $V$ function for Bilinear Game. Right: Decay of $V$ function for Strongly-convex Quadratic Game.}
    \label{fig:convergence_rate}
\end{figure}
\subsection{Two-Player Quadratic Games:}
We describe an experiment for non-convex two-player quadratic games with indefinite $Q$ matrices for both players. We show the decay of the gradient and the $V$ function for the GNI formulation. The other optimization algorithms are seen to be diverging for the indefinite cases (as was shown in the main text) and thus are not shown here. We used $50-$d data, the same stepsizes $\eta=\max_i(\norm{Q_i})$ and $\rho=0.01$ for GNI. The methods are initialized randomly from $N(0,I)$. For clarification, we show the plot on log scale. As can be observed from the plots in Figure~\ref{fig:quad_indefinite_convergence}, $\nabla f$ goes to zero as $V$ goes to zero.

\begin{figure}[!h]
    \centering
    \includegraphics[width=4cm]{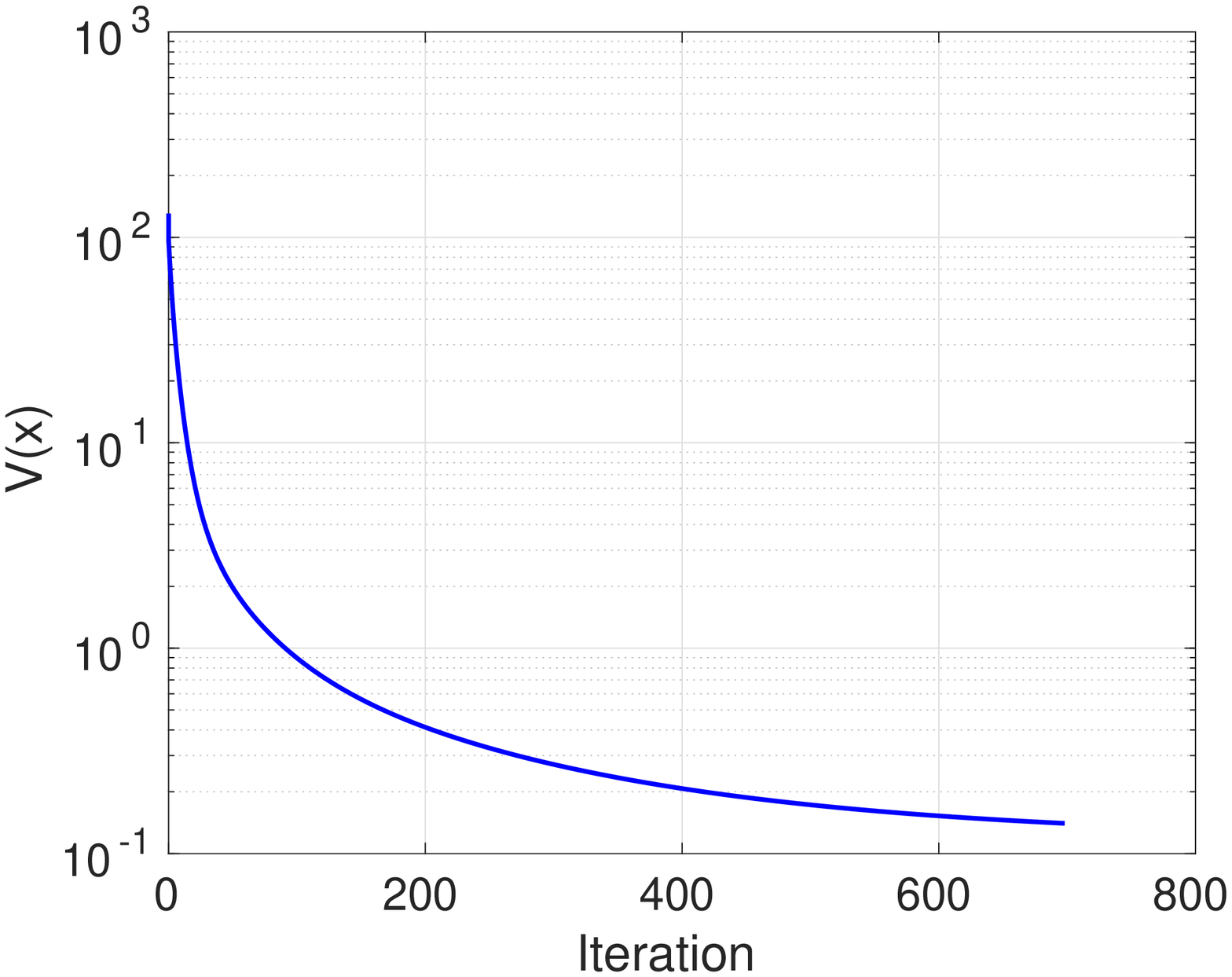}
    \includegraphics[width=4cm]{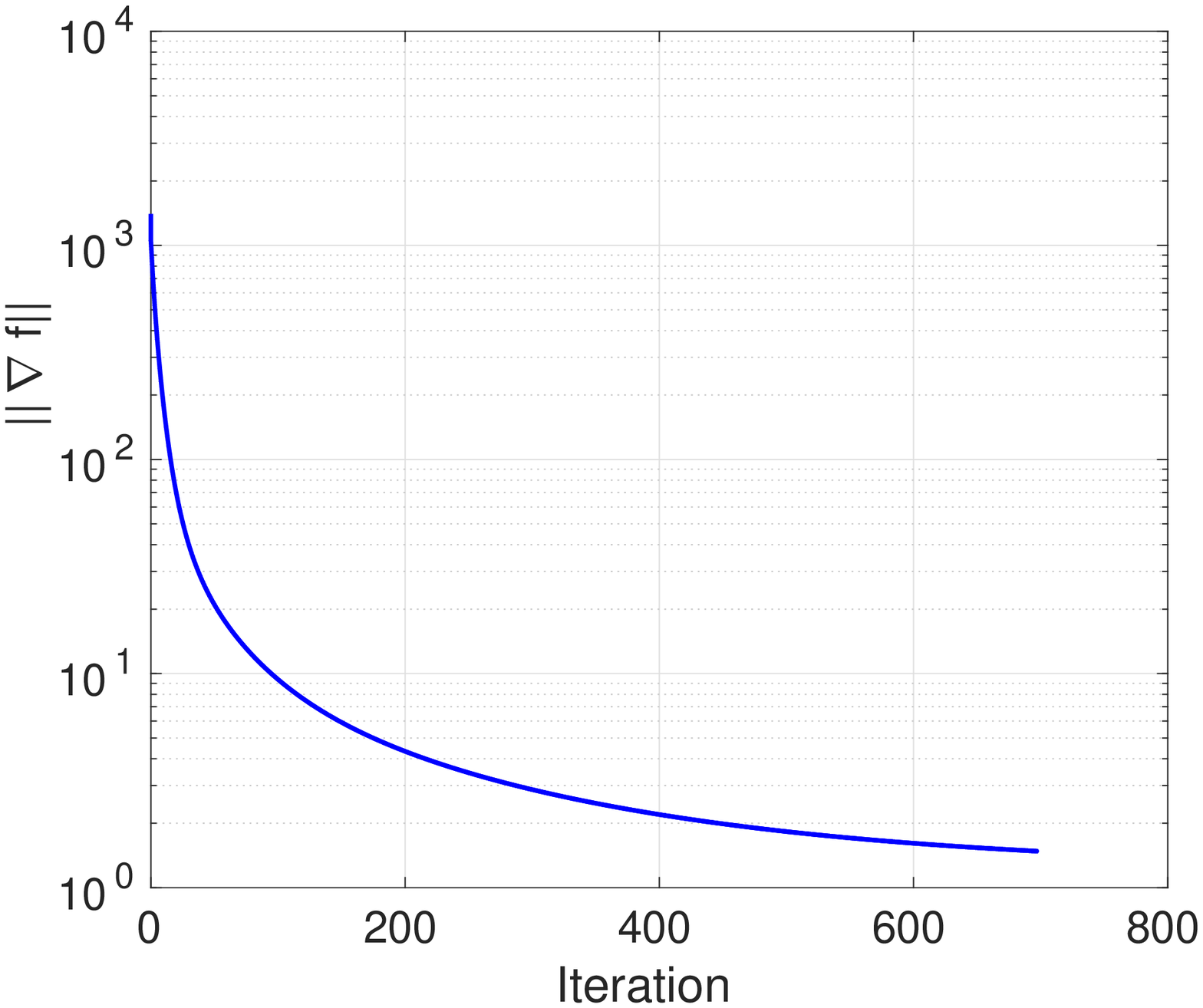}
    \caption{Convergence of GNI method for non-convex quadratic game setting shown on a semi-log plot for clarity. Left: Decay of $V$ function. Right: Decay of the $\nabla f$.}
    \label{fig:quad_indefinite_convergence}
\end{figure}

\begin{figure}[!h]
    \centering
    \includegraphics[width=4cm]{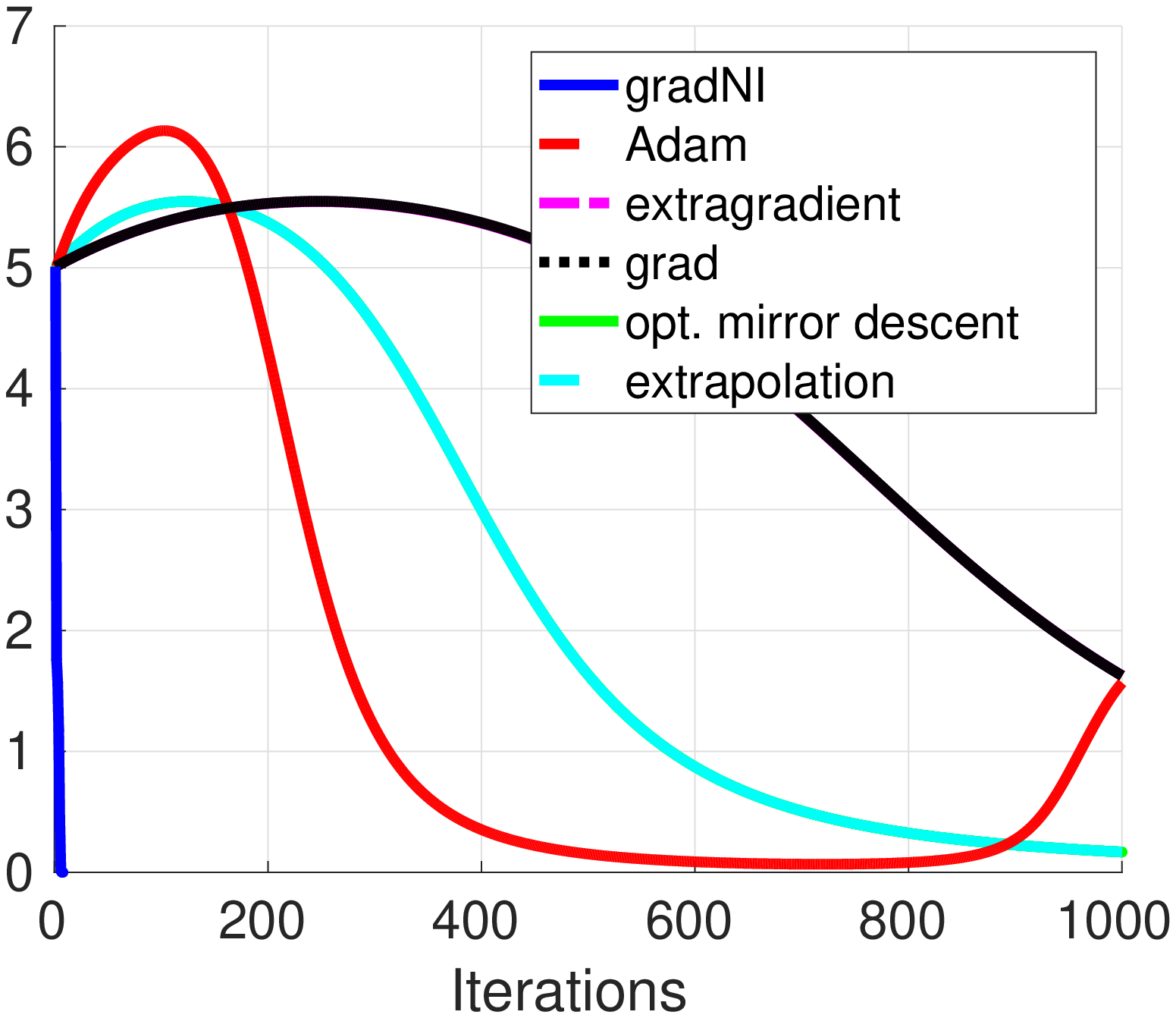}
    \includegraphics[width=4cm]{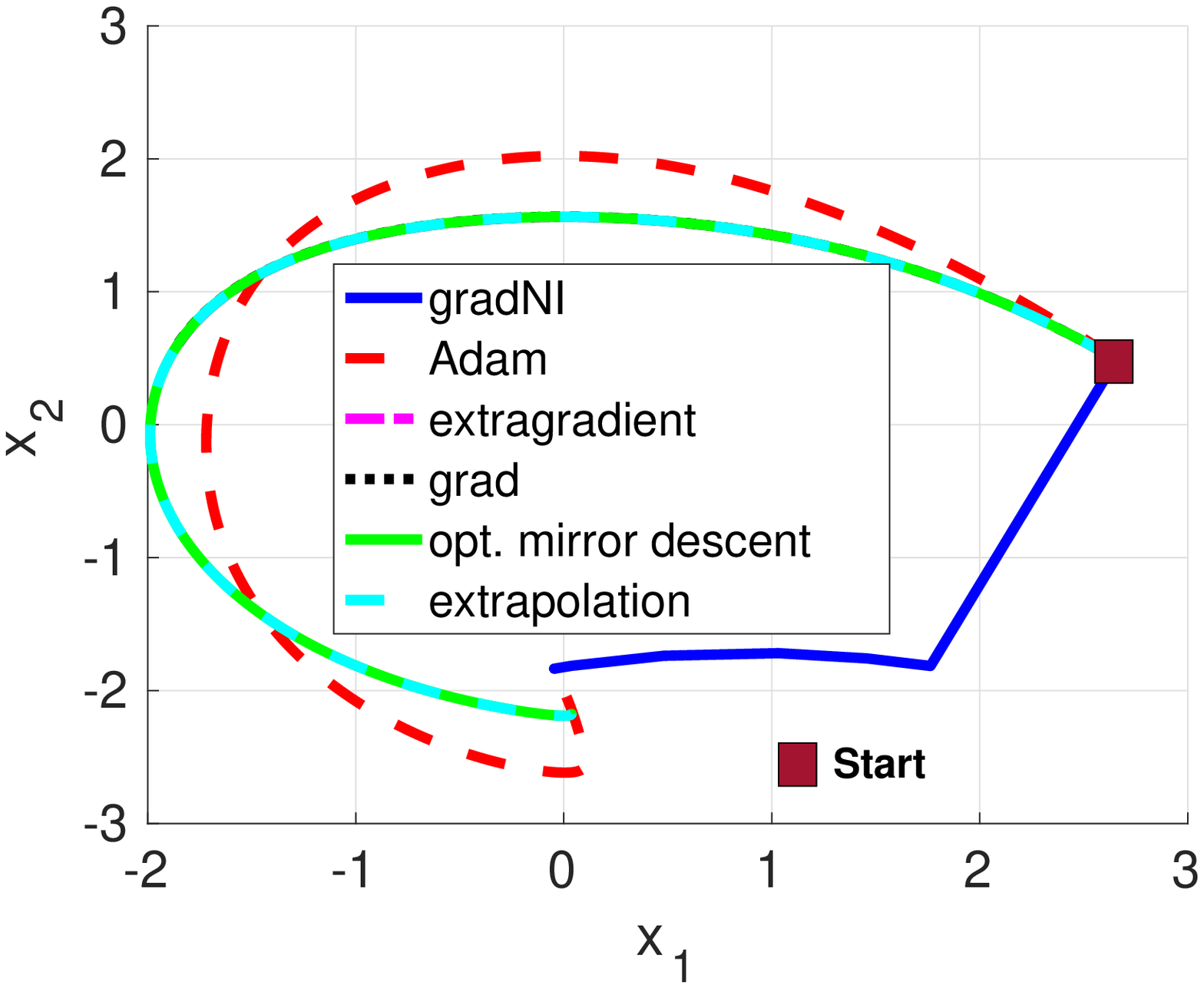}
    \caption{Convergence of GNI against other methods on the Dirac-Delta GAN. Left: Convergence of different methods seen by the decay of $\nabla f$. Right: Trajectory of the two players to the optima. }
    \label{fig:ddgan_convergence_with}
\end{figure}

\begin{figure*}[ht]
\centering
 \begin{tabular}{||c c c c c c c||} 
 \hline
 Algorithm & GNI & Adam & ExGrad & Grad & OMD &ExPol \\ [0.5ex] 
 \hline\hline
 Mean Error & $2.11$ & $0.64$ & $2.17$ & $2.18$ & $1.87$ & $1.87$\\ 
 \hline
 Mean number of Iterations & $77$ & $3048$ & $10000$ & $10000$ & $10000$ & $10000$  \\
 \hline
 \end{tabular}
\caption{\label{tab:error_stats}Error Statistics for GNI compared against other techniques for the Dirac Delta GAN}
\end{figure*}
\subsection{Linear GAN:} We also show some additional results for the Linear GAN which suggests convergence of the proposed method to a NE. The second derivative of the objective function for both the players is positive semidefinite (see Equation $(23)$ in the main paper)  indicating all stationary points are minimas. In the following plots in Figure~\ref{fig:linear_gan_convergence}, we show the convergence of the $V$ function and the $||\nabla f||$ for the GNI formulation. We observe very fast convergence for both the $V$ and the $||\nabla f||$ indicating convergence to a SNP. Additionally, since all SNPs are NEs in this particular setting, the GNI converges to a NE.

\begin{figure}[h]
    \centering
    \includegraphics[width=4cm]{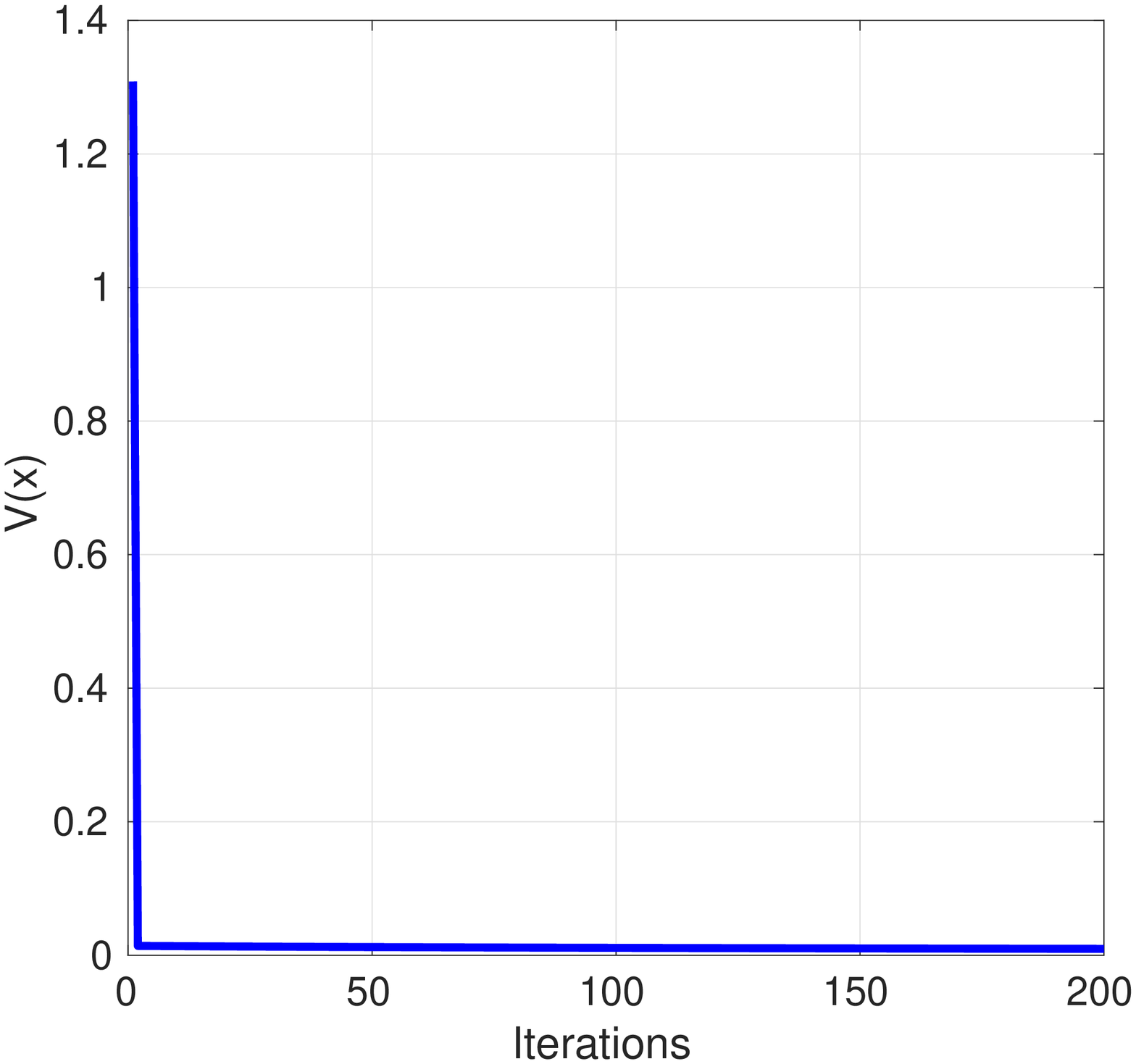}
    \includegraphics[width=4cm]{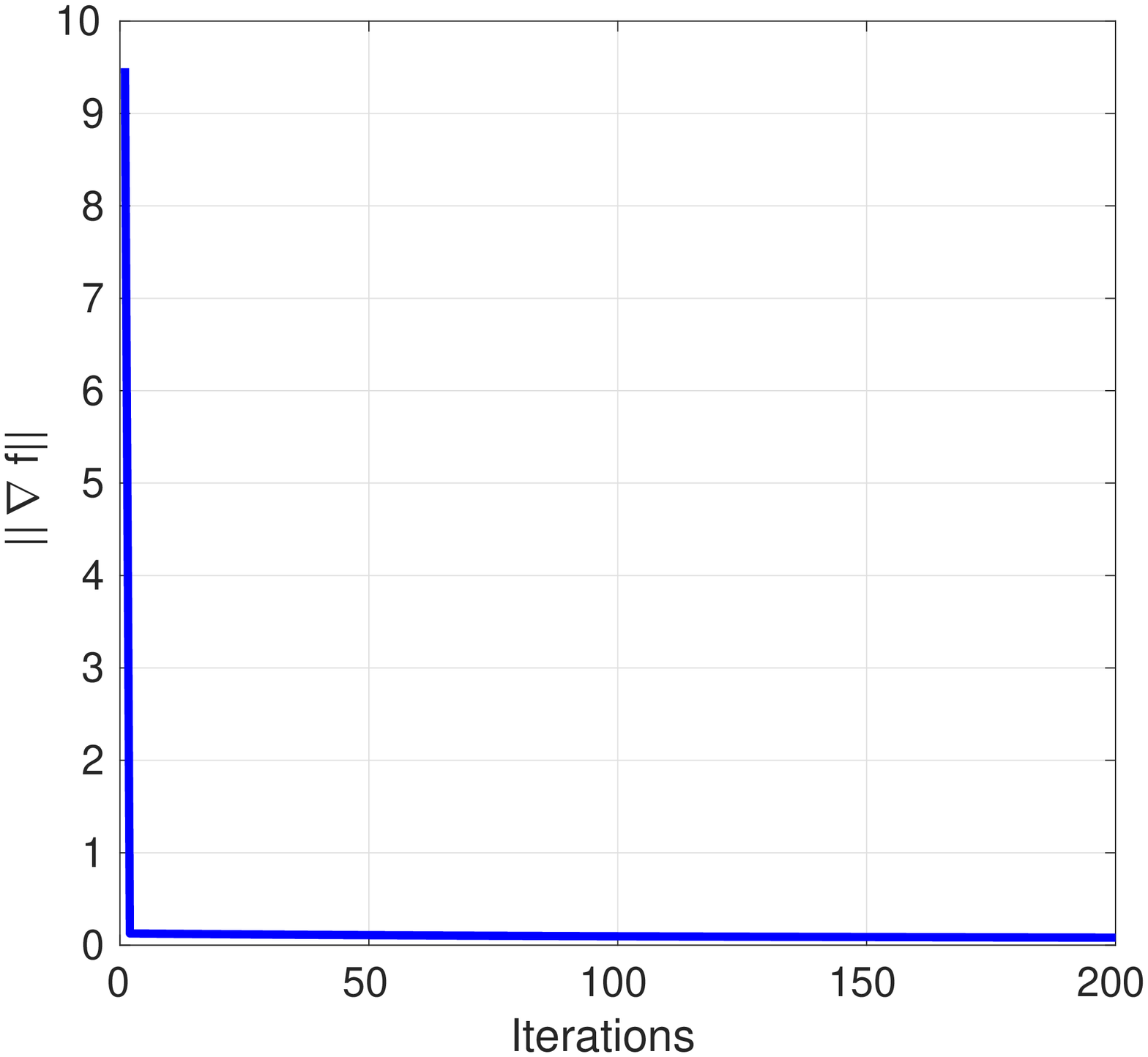}
    \caption{Convergence of $V$ function and $||\nabla f||$ for the Linear GAN discussed in the main paper. Left: Decay of $V$ function. Right: Decay of the $\nabla f$.}
    \label{fig:linear_gan_convergence}
\end{figure}
\subsection{Dirac Delta GAN:}
In this section, we show another experiment for the Dirac Delta GAN that was discussed in the main text. All the parameters for all optimizers are kept constant as in the main text for Dirac Delta GAN. In Figure~\ref{fig:ddgan_convergence_with}, we see the convergence of $\nabla f$ as well as the trajectories followed by the two players to the NE. All the methods converge to the same optima- however, the GNI converges faster than any other method. As observed in the convex quadratic case, we see all descent methods following the same trajectory except for the GNI and Adam. However, it was observed that the GNI and the other algorithms do not converge to the same solution when initialized arbitrarily. To investigate this, we perform an experiment where the game was initialized randomly from $1000$ initial conditions in a square region in $[-4,4]\times [-4,4]$. The error from the ground truth was computed after $10000$ iterations or up on convergence (the minimum of two). Results of the experiment are shown as a table in Figure~\ref{tab:error_stats}. It is observed that the game doesn't converge to the known ground truth for the game-- Adam is able to get closest to the ground truth while GNI converges to a stationary Nash point much faster than all other algorithms. This behavior could be explained by recalling that GNI is using $V$ function to descend and thus, converges to the closest stationary Nash point where $V$ vanishes.

\end{document}